\newcommand{\squishlisttwo}{
 \begin{list}{$\bullet$}
  { \setlength{\itemsep}{0pt}
     \setlength{\parsep}{0pt}
    \setlength{\topsep}{0pt}
    \setlength{\partopsep}{0pt}
    \setlength{\leftmargin}{1em}
    \setlength{\labelwidth}{1.5em}
    \setlength{\labelsep}{0.5em} } }
\newcommand{\squishend}{
  \end{list}  }
\definecolor{red}{rgb}{0,0,0}	
\newtheorem{prop}{Proposition}
\theoremstyle{remark}
\newtheorem{remark}{Remark}
\title{Implicit Posterior Variational Inference for\\
Deep Gaussian Processes}
\author{Haibin Yu\thanks{Equal contribution.}\ \ , 
  Yizhou Chen$^*$, 
  Zhongxiang Dai, 
  Bryan Kian Hsiang Low, \textbf{and}
  Patrick Jaillet$^\dag$
  \\
  Dept. of Computer Science, National University of Singapore, Republic of Singapore \\
  Dept. of Electrical Engineering and Computer Science, MIT, USA$^\dag$ \\
  \texttt{\{haibin,ychen041,daiz,lowkh\}@comp.nus.edu.sg}, \texttt{jaillet@mit.edu$^\dag$}\\
}
\begin{document}

\maketitle

\begin{abstract}
    A multi-layer \emph{deep Gaussian process} (DGP) model is a hierarchical composition of GP models with a greater expressive power. Exact DGP inference is intractable, which has motivated the recent development of deterministic and stochastic approximation methods. Unfortunately, the deterministic approximation methods yield a biased posterior belief while the stochastic one is computationally costly. This paper presents an \emph{implicit posterior variational inference} (IPVI) framework for DGPs that can ideally recover an unbiased posterior belief and still preserve time efficiency. Inspired by generative adversarial networks, our IPVI framework achieves this by casting the DGP inference problem as a two-player game in which a Nash equilibrium, interestingly, coincides with an unbiased posterior belief. This consequently inspires us to devise a best-response dynamics algorithm to search for a Nash equilibrium (i.e.,  an unbiased posterior belief). Empirical evaluation shows that IPVI outperforms the state-of-the-art approximation methods for DGPs.
\end{abstract}
%
\section{Introduction} 
\label{sec: introduction}
The expressive power of the Bayesian non-parametric \emph{Gaussian process} (GP)~\cite{Rasmussen06} models can be significantly boosted by composing them hierarchically into a multi-layer \emph{deep GP} (DGP) model, as shown in the seminal work of~\cite{damianou2013deep}.
Though the DGP model can likewise exploit the notion of inducing variables~\cite{LowUAI13,NghiaICML16,HoangICML16,LowDyDESS15,LowAAAI15,candela05,LowAAAI14,Haibin19} to improve its scalability, doing so does not immediately entail tractable inference, unlike the GP model.
This has motivated the development of deterministic and stochastic approximation methods, the former of which have imposed varying structural assumptions across the DGP hidden layers and assumed a Gaussian posterior belief of the inducing variables~\cite{bui2016deep,dai2015variational,damianou2013deep,hensman2014nested,salimbeni2017doubly}.
However, the work of~\cite{havasi2018inference} has demonstrated that with at least one DGP hidden layer, the posterior belief of the inducing variables is usually non-Gaussian, hence potentially compromising the performance of the deterministic approximation methods due to their biased posterior belief. To resolve this, the stochastic approximation method of~\cite{havasi2018inference} utilizes \emph{stochastic gradient Hamiltonian Monte Carlo} (SGHMC) sampling to draw unbiased samples from the posterior belief.
But, generating such samples is computationally costly in both training and prediction due to its sequential sampling procedure~\cite{wang2018adversarial} and its convergence is also difficult to assess.
So, the challenge remains in devising a time-efficient approximation method that can recover  an unbiased posterior belief.

This paper presents an \emph{implicit posterior variational inference} (IPVI) framework for DGPs (Section~\ref{sec: IPVI DGP}) that can ideally recover an unbiased posterior belief and still preserve time efficiency, hence combining the best of both worlds (respectively, stochastic and deterministic approximation methods).
Inspired by generative adversarial networks~\cite{goodfellow2014generative} that can generate samples to represent complex distributions which are hard to model using an explicit likelihood \cite{karras2017progressive,oord2016wavenet},  
our IPVI framework achieves this by casting the DGP inference problem as a two-player game in which a Nash equilibrium, interestingly, coincides with an unbiased posterior belief. 
This consequently inspires us to devise a best-response dynamics algorithm to search for a Nash equilibrium~\cite{awerbuch2008fast} (i.e., an unbiased posterior belief).
In Section~\ref{sec: arichitecture}, we discuss how the architecture of the generator 
in our IPVI framework is designed to enable parameter tying for a DGP model to alleviate overfitting.
We empirically evaluate the performance of IPVI on several real-world datasets in supervised 
(e.g., regression and classification) 
and unsupervised learning tasks (Section~\ref{sec: experiments}).

%
\section{Background and Related Work} 
\label{sec: background}
\textbf{Gaussian Process (GP).} 
    \label{subsec: single layer gaussian process}
Let a random function $f: \mathbb{R}^D \rightarrow \mathbb{R}$ be distributed by a GP with a zero prior mean and covariance function $k: \mathbb{R}^D \times \mathbb{R}^D \rightarrow \mathbb{R}$.
That is, suppose that a set $\mathbf{y} \triangleq \{ y_n\}_{n=1}^{N}$ of $N$ noisy observed outputs $y_n\triangleq f(\mathbf{x}_n)+\varepsilon(\mathbf{x}_n)$ (i.e., corrupted by an i.i.d. Gaussian noise $\varepsilon(\mathbf{x}_n)$ with noise variance $\nu^2$) are available for some set $\mathbf{X} \triangleq \{ \mathbf{x}_n\}_{n=1}^{N}$ of $N$ training inputs.
Then,  
the set $\mathbf{f} \triangleq \{f(\mathbf{x}_n)\}_{n=1}^{N}$ of latent outputs follow a Gaussian prior belief $p(\mathbf{f}) \triangleq \mathcal{N}(\mathbf{f}|\mathbf{0}, \mathbf{K}_{\mathbf{X}\mathbf{X}})$ where $\mathbf{K}_{\mathbf{X}\mathbf{X}}$ denotes a covariance matrix with components $k(\mathbf{x}_n, \mathbf{x}_{n'})$ for $n,n'=1,\ldots,N$.
It follows that $p(\mathbf{y}|\mathbf{f})=\mathcal{N}(\mathbf{y}|\mathbf{f},\nu^2\mathbf{I})$.
The GP predictive/posterior belief of the latent outputs $\mathbf{f}^{\star} \triangleq \{f(\mathbf{x}^{\star})\}_{\mathbf{x}^{\star} \in \mathbf{X}^{\star}}$ for any set $\mathbf{X}^{\star}$ of test inputs can be computed in closed form~\cite{Rasmussen06} by marginalizing out $\mathbf{f}$: 
$p(\mathbf{f}^{\star}|\mathbf{y}) = \int p(\mathbf{f}^{\star}|\mathbf{f})\ p(\mathbf{f}|\mathbf{y})\ \mathrm{d}\mathbf{f}$ but incurs cubic time in $N$, hence scaling poorly to massive datasets.

To improve its scalability to linear time in $N$, the \emph{sparse GP} (SGP) models spanned by the unifying view of~\cite{candela05} exploit a set $\mathbf{u} \triangleq \{u_m \triangleq f(\mathbf{z}_m)\}_{m=1}^{M}$ of inducing output variables for some small set $\mathbf{Z} \triangleq \{\mathbf{z}_m\}_{m=1}^{M}$ of inducing inputs (i.e., $M \ll N$).
Then,
\begin{equation}
p(\mathbf{y}, \mathbf{f}, \mathbf{u}) = p(\mathbf{y}|\mathbf{f})\ p(\mathbf{f}|\mathbf{u})\ p(\mathbf{u})
\label{crap}
\end{equation}
such that
$p(\mathbf{f}|\mathbf{u}) = \mathcal{N}(\mathbf{f}|\mathbf{K}_{\mathbf{X}\mathbf{Z}}\mathbf{K}_{\mathbf{Z}\mathbf{Z}}^{-1}\mathbf{u}, \mathbf{K}_{\mathbf{X}\mathbf{X}} - \mathbf{K}_{\mathbf{X}\mathbf{Z}}\mathbf{K}_{\mathbf{Z}\mathbf{Z}}^{-1}\mathbf{K}_{\mathbf{Z}\mathbf{X}})$
where, with a slight abuse of notation, $\mathbf{u}$ is treated as a column vector here, $\mathbf{K}_{\mathbf{X}\mathbf{Z}}\triangleq \mathbf{K}^{\top}_{\mathbf{Z}\mathbf{X}}$, and $\mathbf{K}_{\mathbf{Z}\mathbf{Z}}$ and $\mathbf{K}_{\mathbf{Z}\mathbf{X}}$ denote covariance matrices with components $k(\mathbf{z}_m, \mathbf{z}_{m'})$ for $m,m'=1,\ldots,M$ and $k(\mathbf{z}_m, \mathbf{x}_{n})$ for $m =1,\ldots,M$ and $n=1,\ldots,N$, respectively.
The SGP predictive belief can also be computed in closed form by marginalizing out $\mathbf{u}$: 
$p(\mathbf{f}^\star|\mathbf{y}) = \int p(\mathbf{f}^\star|\mathbf{u})\ p(\mathbf{u}|\mathbf{y})\ \mathrm{d}\mathbf{u}$.
        
The work of~\cite{Titsias09} has proposed a principled \emph{variational inference} (VI) framework that  approximates the joint posterior belief $p(\mathbf{f}, \mathbf{u}|\mathbf{y})$ with a variational posterior $q(\mathbf{f}, \mathbf{u}) \triangleq p(\mathbf{f}|\mathbf{u})\ q(\mathbf{u})$ by minimizing the \emph{Kullback-Leibler} (KL) distance between them, 
which is equivalent to maximizing a lower bound of the log-marginal likelihood (i.e., also known as the \emph{evidence lower bound} (ELBO)):
        \begin{equation*}
            \mathrm{ELBO} \triangleq \mathbb{E}_{q(\mathbf{f})}[\log p(\mathbf{y}|\mathbf{f})] - \mathrm{KL}[q(\mathbf{u})\Vert p(\mathbf{u})]
        \end{equation*}
where $q(\mathbf{f}) \triangleq \int p(\mathbf{f}|\mathbf{u})\ q(\mathbf{u})\ \mathrm{d}\mathbf{u}$. 
        A common choice in VI is the Gaussian variational posterior $q(\mathbf{u}) \triangleq \mathcal{N}(\mathbf{u}|\mathbf{m}, \mathbf{S})$ of the inducing variables $\mathbf{u}$~\cite{deisenroth2015distributed,Yarin14,Lawrence13,NghiaICML16,HoangICML16,Titsias09a} which results in a Gaussian marginal $q(\mathbf{f})=\mathcal{N}(\mathbf{f}|\boldsymbol{\mu}, \mathbf{\Sigma})$ where $\boldsymbol{\mu} \triangleq \mathbf{K}_{\mathbf{X}\mathbf{Z}}\mathbf{K}_{\mathbf{Z}\mathbf{Z}}^{-1}\mathbf{m}$ and $\mathbf{\Sigma} \triangleq \mathbf{K}_{\mathbf{X}\mathbf{X}} - \mathbf{K}_{\mathbf{X}\mathbf{Z}}\mathbf{K}_{\mathbf{Z}\mathbf{Z}}^{-1}(\mathbf{K}_{\mathbf{Z}\mathbf{Z}} - \mathbf{S})\mathbf{K}_{\mathbf{Z}\mathbf{Z}}^{-1}\mathbf{K}_{\mathbf{Z}\mathbf{X}}$. 

\textbf{Deep Gaussian Process (DGP).} 
    \label{subsec: deep gaussian process}
%
        A multi-layer DGP model is a hierarchical composition of GP models. Consider a DGP with a depth of $L$ such that each DGP layer is associated with a set $\mathbf{F}_{\ell-1}$ of inputs and a set $\mathbf{F}_{\ell}$ of outputs for $\ell = 1, \dots, L$ and $\mathbf{F}_0 \triangleq \mathbf{X}$. Let $\boldsymbol{\mathcal{F}}\triangleq\{\mathbf{F}_{\ell}\}_{\ell=1}^L$, and the inducing inputs and corresponding inducing output variables for DGP layers $\ell = 1, \dots, L$ be denoted by the respective sets $\boldsymbol{\mathcal{Z}}\triangleq\{\mathbf{Z}_{\ell}\}_{\ell=1}^L$ and $\boldsymbol{\mathcal{U}}\triangleq\{\mathbf{U}_{\ell}\}_{\ell=1}^L$. Similar to the joint probability distribution of the SGP model~\eqref{crap},
        \begin{equation*}
            p(\mathbf{y}, \boldsymbol{\mathcal{F}}, \boldsymbol{\mathcal{U}})
            =  \underbrace{p(\mathbf{y}|\mathbf{F}_L)}_\text{data likelihood}\;
            \underbrace{\left[\prod_{\ell=1}^{L}p(\mathbf{F}_{\ell}|\mathbf{U}_{\ell})\right]p(\boldsymbol{\mathcal{U}})}_\text{DGP prior}.
        \end{equation*}
        Similarly, the variational posterior is assumed to be $q(\boldsymbol{\mathcal{F}}, \boldsymbol{\mathcal{U}}) \triangleq \left[\prod_{\ell=1}^{L}p(\mathbf{F}_{\ell}|\mathbf{U}_{\ell})\right]q(\boldsymbol{\mathcal{U}})$, thus resulting in the following ELBO for the DGP model:
        \begin{equation}
            \mathrm{ELBO} \triangleq \int q(\mathbf{F}_L) \log p(\mathbf{y}|\mathbf{F}_L)\ \mathrm{d}\mathbf{F}_L  - \mathrm{KL}[q(\boldsymbol{\mathcal{U}})\Vert p(\boldsymbol{\mathcal{U}})] 
            \label{eq: elboDGP}
        \end{equation} 
        where $q(\mathbf{F}_L) \triangleq \int \prod_{\ell=1}^{L}p(\mathbf{F}_{\ell}|\mathbf{U}_{\ell},\mathbf{F}_{\ell-1})\ q(\boldsymbol{\mathcal{U}})\ \mathrm{d}\mathbf{F}_1\dots \mathrm{d}\mathbf{F}_{L-1}\mathrm{d}\boldsymbol{\mathcal{U}}$. To compute $q(\mathbf{F}_L)$, the work of~\cite{salimbeni2017doubly} has proposed the use of the reparameterization trick~\cite{kingma2013auto} and Monte Carlo sampling, which are adopted in this work.
%
%
\begin{remark}
To the best of our knowledge, the DGP models exploiting the inducing variables\footnote{An alternative is to modify the DGP prior directly and perform inference with a parametric model. The work of~\cite{cutajar2017random} has approximated the DGP prior with the spectral density of a kernel~\cite{NghiaAAAI17} such that the kernel has an analytical spectral density.} and the VI framework \cite{dai2015variational,damianou2013deep,hensman2014nested,salimbeni2017doubly} have imposed the highly restrictive assumptions of (i) mean field approximation $q(\boldsymbol{\mathcal{U}}) \triangleq \prod_{\ell=1}^{L}q(\mathbf{U}_{\ell})$ and (ii) biased Gaussian variational posterior $q(\mathbf{U}_{\ell})$. 
        In fact, the true posterior belief usually exhibits a high correlation across the DGP layers and is non-Gaussian~\cite{havasi2018inference}, hence potentially compromising the performance of such deterministic approximation methods for DGP models.
To remove these assumptions, we will propose a principled approximation method that can generate unbiased posterior samples even under the VI framework, as detailed in Section~\ref{sec: IPVI DGP}.
\label{rem1}
\end{remark}
%
\section{Implicit Posterior Variational Inference (IPVI) for DGPs} 
\label{sec: IPVI DGP}
%
Unlike the conventional VI framework for existing DGP models~\cite{dai2015variational,damianou2013deep,hensman2014nested,salimbeni2017doubly}, our proposed IPVI framework does not need to impose their highly restrictive assumptions (Remark~\ref{rem1}) and can still preserve the time efficiency of VI.
Inspired by previous works on adversarial-based inference~\cite{huszar2017variational,mescheder2017adversarial},  
IPVI achieves this by first generating posterior samples $\boldsymbol{\mathcal{U}}\triangleq g_\Phi(\epsilon)$ with a black-box \textbf{generator} $g_\Phi(\epsilon)$ parameterized by $\Phi$ and a  random noise $\epsilon\sim \mathcal{N}(\mathbf{0},\mathbf{I})$. By representing the variational posterior as $q_{\Phi}(\boldsymbol{\mathcal{U}})\triangleq\int p(\boldsymbol{\mathcal{U}}|\epsilon)\mathrm{d}\epsilon$, the ELBO in~\eqref{eq: elboDGP} can be re-written as
        \begin{equation}
            \mathrm{ELBO} = \mathbb{E}_{q(\mathbf{F}_L)}[\log p(\mathbf{y}|\mathbf{F}_L)] - \mathrm{KL}[q_\Phi(\boldsymbol{\mathcal{U}})\Vert p(\boldsymbol{\mathcal{U}})]\ .
            \label{eq: phi elbo}
        \end{equation}
    An immediate advantage of the generator $g_\Phi(\epsilon)$ is that it can generate the posterior samples in parallel by feeding it a batch of randomly sampled $\epsilon$'s.
However, representing the variational posterior $q_{\Phi}(\boldsymbol{\boldsymbol{\mathcal{U}}})$ implicitly makes it impossible to evaluate the KL distance in~\eqref{eq: phi elbo} since $q_{\Phi}(\boldsymbol{\boldsymbol{\mathcal{U}}})$ cannot be calculated explicitly. 
By observing that the KL distance is equal to the expectation of the log-density ratio $\mathbb{E}_{q_\Phi(\boldsymbol{\mathcal{U}})}[\log q_\Phi(\boldsymbol{\boldsymbol{\mathcal{U}}}) - \log p(\boldsymbol{\boldsymbol{\mathcal{U}}})]$, we can circumvent an explicit calculation of the KL distance term by implicitly representing the log-density ratio as a separate function $T$ to be optimized, as shown in our first result below:
%
    \begin{prop}
                \label{prop: optimal T(u)}
            Let $\sigma(x)\triangleq{1}/{(1+\exp(-x))}$. Consider the following maximization problem:
                \begin{equation}
                  \max_{T}\  \mathbb{E}_{p(\boldsymbol{\mathcal{U}})}[\log (1-\sigma(T(\boldsymbol{\mathcal{U}}))] + \mathbb{E}_{q_{\Phi}(\boldsymbol{\mathcal{U}})}[\log \sigma(T(\boldsymbol{\mathcal{U}}))]\ .
                \label{eq: discriminator}
                \end{equation}
If $p(\boldsymbol{\mathcal{U}})$ and $q_\Phi(\boldsymbol{\mathcal{U}})$ are known, then the optimal $T^*$ with respect to~\eqref{eq: discriminator} is the log-density ratio:
            \begin{equation}
            T^*(\boldsymbol{\mathcal{U}}) = \log q_\Phi(\boldsymbol{\mathcal{U}}) - \log p(\boldsymbol{\mathcal{U}})\ .
            \label{voodoo}
            \end{equation}
    \end{prop}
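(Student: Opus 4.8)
The plan is to recognize this as the optimal-discriminator computation from GAN theory~\cite{goodfellow2014generative}, here adapted to the density ratio $q_\Phi/p$. First I would combine the two expectations in~\eqref{eq: discriminator} into a single integral over $\boldsymbol{\mathcal{U}}$,
\[
J(T) \triangleq \int \Big[ p(\boldsymbol{\mathcal{U}})\log\big(1-\sigma(T(\boldsymbol{\mathcal{U}}))\big) + q_\Phi(\boldsymbol{\mathcal{U}})\log\sigma(T(\boldsymbol{\mathcal{U}}))\Big]\,\mathrm{d}\boldsymbol{\mathcal{U}}\ ,
\]
and then observe that, since $T$ is an unconstrained (measurable) function and the integrand depends on $T$ only through the scalar value $T(\boldsymbol{\mathcal{U}})$ at each $\boldsymbol{\mathcal{U}}$, maximizing $J$ reduces to maximizing the integrand pointwise. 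Concretely, fixing $\boldsymbol{\mathcal{U}}$ and writing $a \triangleq p(\boldsymbol{\mathcal{U}}) \ge 0$, $b \triangleq q_\Phi(\boldsymbol{\mathcal{U}}) \ge 0$, it suffices to maximize $h(t) \triangleq a\log(1-\sigma(t)) + b\log\sigma(t)$ over $t \in \mathbb{R}$.

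Next I would differentiate using the identity $\sigma'(t)=\sigma(t)(1-\sigma(t))$, which gives $\tfrac{\mathrm{d}}{\mathrm{d}t}\log\sigma(t)=1-\sigma(t)$ and $\tfrac{\mathrm{d}}{\mathrm{d}t}\log(1-\sigma(t))=-\sigma(t)$. Hence $h'(t) = b\,(1-\sigma(t)) - a\,\sigma(t) = b - (a+b)\,\sigma(t)$, and setting $h'(t)=0$ yields $\sigma(t^*) = b/(a+b)$. Inverting the sigmoid through $\sigma(t)=1/(1+\exp(-t))$ gives $\exp(-t^*)=a/b$, so that $t^* = \log b - \log a = \log q_\Phi(\boldsymbol{\mathcal{U}}) - \log p(\boldsymbol{\mathcal{U}})$, which is exactly~\eqref{voodoo}. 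To confirm this stationary point is a global maximizer rather than merely a critical point, I would note that both $\log\sigma$ and $\log(1-\sigma)$ have second derivative $-\sigma(1-\sigma) < 0$, so $h$ is concave for $a,b \ge 0$ (strictly concave when $a+b>0$); thus its unique stationary point is the global maximum. Collecting these pointwise maximizers into a single function recovers the claimed $T^*$.

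The main obstacle is justifying the reduction to pointwise optimization, i.e.\ that because $T$ ranges over an unrestricted function class we may optimize the integrand at each $\boldsymbol{\mathcal{U}}$ independently rather than solving a genuine calculus-of-variations problem. This is clean here precisely because no derivatives of $T$ appear in the integrand, so the Euler--Lagrange condition degenerates to the pointwise stationarity $h'(t)=0$. The only tacit assumption is that the supremum is attained within the admissible class: on the support where $a,b>0$ the finite maximizer $t^* = \log(q_\Phi/p)$ exists, while on the boundary where $b=0$ the maximizing $t$ is pushed to $-\infty$ with $\sigma(t)\to 0$, which is consistent with a log-density ratio of $-\infty$. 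Since the statement is conditioned on both $p(\boldsymbol{\mathcal{U}})$ and $q_\Phi(\boldsymbol{\mathcal{U}})$ being known and strictly positive on their common support, this boundary case does not affect the conclusion.
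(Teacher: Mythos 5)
Your proposal is correct and follows essentially the same route as the paper: both reduce the functional maximization to pointwise maximization of the integrand (valid because no derivatives of $T$ appear), arrive at $\sigma(T^*(\boldsymbol{\mathcal{U}})) = q_\Phi(\boldsymbol{\mathcal{U}})/(q_\Phi(\boldsymbol{\mathcal{U}})+p(\boldsymbol{\mathcal{U}}))$, and invert the sigmoid to obtain the log-density ratio. The only difference is cosmetic: the paper substitutes $t=\sigma(T(\boldsymbol{\mathcal{U}}))$ and cites the standard fact that $a\log t + b\log(1-t)$ is maximized over $t\in[0,1]$ at $t=a/(a+b)$, whereas you differentiate directly in the pre-sigmoid variable and verify concavity explicitly, which makes your write-up slightly more self-contained.
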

Its proof 
(Appendix~\ref{append: proof of Tpsiu}) is similar to that of Proposition $1$ in~\cite{goodfellow2014generative} except that we use a sigmoid function $\sigma$ to reveal the log-density ratio. 
Note that~\eqref{eq: discriminator} defines a binary cross-entropy between samples from the variational posterior $q_\Phi(\boldsymbol{\mathcal{U}})$ and prior $p(\boldsymbol{\mathcal{U}})$. Intuitively, $T$ in~\eqref{eq: discriminator}, which we refer to as a \textbf{discriminator}, tries to distinguish between $q_\Phi(\boldsymbol{\mathcal{U}})$ and $p(\boldsymbol{\mathcal{U}})$ by outputting $\sigma(T(\boldsymbol{\mathcal{U}}))$ as the probability of $\boldsymbol{\mathcal{U}}$ being a sample from $q_\Phi(\boldsymbol{\mathcal{U}})$ rather than $p(\boldsymbol{\mathcal{U}})$. 

Using Proposition~\ref{prop: optimal T(u)} (i.e.,~\eqref{voodoo}), the ELBO in~\eqref{eq: phi elbo} can be re-written as
        \begin{equation}
            \mathrm{ELBO} = \mathbb{E}_{q_{\Phi}(\boldsymbol{\mathcal{U}})}[\mathcal{L}(\theta, \mathbf{X}, \mathbf{y}, \boldsymbol{\mathcal{U}}) - T^*(\boldsymbol{\mathcal{U}}) ]
        \label{eq: new elbo}
        \end{equation}
where $\mathcal{L}(\theta, \mathbf{X}, \mathbf{y}, \boldsymbol{\mathcal{U}}) \triangleq \mathbb{E}_{p(\mathbf{F}_L|\boldsymbol{\mathcal{U}})}[\log p(\mathbf{y}|\mathbf{F}_L)]$ and $\theta$ denotes the  DGP model hyperparameters.        
The ELBO can now be calculated given the optimal discriminator $T^*$. In our implementation, we adopt a parametric representation for discriminator $T$. In principle, the parametric representation is required to be expressive enough to be able to represent the optimal discriminator $T^*$ accurately.
        Motivated by the fact that deep neural networks are universal function approximators \cite{hornik1989multilayer}, we represent discriminator $T_{\Psi}$ by a neural network with parameters $\Psi$; the optimal $T_{\Psi^*}$ is thus parameterized by $\Psi^*$. The architecture of the generator and discriminator in our IPVI framework will be discussed in Section~\ref{sec: arichitecture}.

The ELBO in~\eqref{eq: new elbo} can be optimized with respect to $\Phi$ and $\theta$ via gradient ascent, provided that the optimal $T_{\Psi^*}$ (with respect to $q_\Phi$) can be obtained in every iteration. 
One way to achieve this
%
%
 is to cast the optimization of the ELBO as a two-player pure-strategy
         game between \textbf{Player 1} (representing discriminator with strategy $\{\Psi\}$) vs. \textbf{Player 2} (jointly representing generator and DGP model with strategy $\{\Phi, \theta\}$) that is defined based on the following payoffs:
        \begin{equation}
        \begin{array}{rl}
            \displaystyle \textbf{Player 1 :} & \displaystyle\max_{\{\Psi\}}\ \mathbb{E}_{p(\boldsymbol{\mathcal{U}})}[\log (1-\sigma(T_{\Psi}(\boldsymbol{\mathcal{U}}))] + \mathbb{E}_{q_{\Phi}(\boldsymbol{\mathcal{U}})}[\log \sigma(T_{\Psi}(\boldsymbol{\mathcal{U}}))] \ ,\\
            \displaystyle \textbf{Player 2 :} & \displaystyle\max_{\{\theta, \Phi\}}\ \mathbb{E}_{q_{\Phi}(\boldsymbol{\mathcal{U}})}[\mathcal{L}(\theta, \mathbf{X}, \mathbf{y}, \boldsymbol{\mathcal{U}}) - T_{\Psi}(\boldsymbol{\mathcal{U}}) ]\ .
        \end{array}
        \label{eq: payoff}
        \end{equation}
        \begin{prop}
            Suppose that the parametric representations of $T_\Psi$ and $g_\Phi$ are expressive enough to represent any function. If $(\{\Psi^*\}, \{\theta^*, \Phi^*\})$ is a Nash equilibrium of the game in~\eqref{eq: payoff}, then
            $\{\theta^*, \Phi^*\}$ is a global maximizer of the ELBO in~\eqref{eq: phi elbo} such that
            (a) $\theta^*$ is the maximum likelihood assignment for the DGP model, and 
            (b) $q_{\Phi^*}(\boldsymbol{\mathcal{U}})$ is equal to the true posterior belief $p(\boldsymbol{\mathcal{U}}|\mathbf{y})$.            
        \label{prop: nash equilibrium}
        \end{prop}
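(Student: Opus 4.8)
The plan is to exploit the two best-response conditions defining the Nash equilibrium $(\{\Psi^*\},\{\theta^*,\Phi^*\})$ one at a time. First I would use that $\Psi^*$ is a best response to $\{\theta^*,\Phi^*\}$ for Player 1: since $T_\Psi$ is expressive enough to represent any function, Player 1's payoff is exactly the objective of Proposition~\ref{prop: optimal T(u)} with variational posterior $q_{\Phi^*}$, so its maximizer is the log-density ratio $T_{\Psi^*}(\boldsymbol{\mathcal{U}}) = \log q_{\Phi^*}(\boldsymbol{\mathcal{U}}) - \log p(\boldsymbol{\mathcal{U}})$. Substituting this frozen discriminator into Player 2's payoff and comparing with~\eqref{eq: new elbo} shows that, when evaluated at the equilibrium generator $\Phi^*$, Player 2's payoff coincides exactly with the true ELBO in~\eqref{eq: phi elbo}.

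The crux is then to upgrade ``Player 2's payoff is maximized at $(\theta^*,\Phi^*)$'' to ``the ELBO is maximized at $(\theta^*,\Phi^*)$'', even though Player 2 optimizes against the \emph{frozen} ratio $T_{\Psi^*}$ rather than the $\Phi$-dependent ratio appearing in~\eqref{eq: new elbo}. To do this I would compute the gap between Player 2's payoff $J(\theta,\Phi)$ and $\mathrm{ELBO}(\theta,\Phi)$; the likelihood terms $\mathcal{L}$ cancel and the remainder is
\begin{equation*}
J(\theta,\Phi) - \mathrm{ELBO}(\theta,\Phi) = \mathbb{E}_{q_\Phi(\boldsymbol{\mathcal{U}})}\!\left[\log q_\Phi(\boldsymbol{\mathcal{U}}) - \log q_{\Phi^*}(\boldsymbol{\mathcal{U}})\right] = \mathrm{KL}\!\left[q_\Phi \Vert q_{\Phi^*}\right] \ge 0\ ,
\end{equation*}
which vanishes precisely when $q_\Phi = q_{\Phi^*}$. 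Chaining this with the best-response inequality $J(\theta^*,\Phi^*) \ge J(\theta,\Phi)$ yields the sandwich $\mathrm{ELBO}(\theta^*,\Phi^*) = J(\theta^*,\Phi^*) \ge J(\theta,\Phi) \ge \mathrm{ELBO}(\theta,\Phi)$ for every $(\theta,\Phi)$, i.e.\ $\{\theta^*,\Phi^*\}$ is a global maximizer of the ELBO.

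Finally I would read off (a) and (b) from the standard variational identity $\mathrm{ELBO}(\theta,\Phi) = \log p_\theta(\mathbf{y}) - \mathrm{KL}[q_\Phi(\boldsymbol{\mathcal{U}}) \Vert p_\theta(\boldsymbol{\mathcal{U}}|\mathbf{y})]$. Because $g_\Phi$ is expressive enough to realize any distribution, attaining the global maximum drives the non-negative KL term to zero, forcing $q_{\Phi^*}(\boldsymbol{\mathcal{U}}) = p(\boldsymbol{\mathcal{U}}|\mathbf{y})$, which is (b); the surviving term $\log p_\theta(\mathbf{y})$ is then maximized over $\theta$, so $\theta^*$ is the maximum-likelihood assignment, which is (a).

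I expect the main obstacle to be the subtlety that the prior $p(\boldsymbol{\mathcal{U}})$ --- and hence the frozen ratio $T_{\Psi^*}$ --- in general depends on the hyperparameters $\theta$ that Player 2 varies simultaneously, so the clean cancellation above holds verbatim only when the comparison is organized carefully. Concretely, I would first establish (b) by restricting Player 2's deviation to $\theta = \theta^*$ with $\Phi$ free, where the $p_{\theta^*}(\boldsymbol{\mathcal{U}})$ terms genuinely cancel and the gap is exactly $\mathrm{KL}[q_\Phi \Vert q_{\Phi^*}]$, and only then treat the hyperparameter optimality (a) separately via the marginal-likelihood bound. I would therefore present the $\theta$-independent-prior case as the clean sandwich argument and handle the coupling of $T_{\Psi^*}$ to $\theta$ as the careful extension, verifying in each deviation that the residual gap remains a non-negative divergence.
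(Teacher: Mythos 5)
Your core argument is exactly the paper's proof: Player 1's best response plus expressiveness gives $T_{\Psi^*}(\boldsymbol{\mathcal{U}})=\log q_{\Phi^*}(\boldsymbol{\mathcal{U}})-\log p(\boldsymbol{\mathcal{U}})$ via Proposition~\ref{prop: optimal T(u)}, substituting this frozen discriminator into Player 2's payoff yields the decomposition $\mathcal{F}(\theta,\Phi)=\mathcal{EL}(\theta,\Phi)+\mathrm{KL}[q_{\Phi}\Vert q_{\Phi^*}]$, and your ``sandwich'' $\mathrm{ELBO}(\theta^*,\Phi^*)=\mathcal{F}(\theta^*,\Phi^*)\ge\mathcal{F}(\theta,\Phi)\ge\mathrm{ELBO}(\theta,\Phi)$ is just the contrapositive form of the paper's proof by contradiction; parts (a) and (b) are then read off in the same way.

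The one place you depart from the paper is the $\theta$-dependence of the prior, and here your instinct is right but your proposed repair does not close. The issue is genuine: in a DGP, $p(\boldsymbol{\mathcal{U}})$ is the GP prior at the inducing inputs and therefore depends on the kernel hyperparameters in $\theta$, whereas the paper's proof silently writes $\log p(\boldsymbol{\mathcal{U}})$ as $\theta$-free so that it cancels inside $\mathcal{EL}(\theta,\Phi)$; your restriction of deviations to $\theta=\theta^*$ does rigorously recover (b). However, for (a) your claim that ``the residual gap remains a non-negative divergence'' fails: for a deviation $(\theta,\Phi)$ with $\theta\neq\theta^*$ the gap between Player 2's payoff in~\eqref{eq: payoff} and the ELBO in~\eqref{eq: phi elbo} is
\begin{equation*}
\mathcal{F}(\theta,\Phi)-\mathrm{ELBO}(\theta,\Phi)=\mathrm{KL}[q_{\Phi}\Vert q_{\Phi^*}]+\mathbb{E}_{q_{\Phi}}\!\left[\log p_{\theta^*}(\boldsymbol{\mathcal{U}})-\log p_{\theta}(\boldsymbol{\mathcal{U}})\right]\ ,
\end{equation*}
and the second term is a difference of cross-entropies with no definite sign, so the sandwich breaks exactly where you need it. What the Nash condition actually gives for the hyperparameters (fixing $\Phi=\Phi^*$ and using (b)) is $\theta^*\in\arg\max_{\theta}\mathbb{E}_{q_{\Phi^*}(\boldsymbol{\mathcal{U}})}[\mathcal{L}(\theta,\mathbf{X},\mathbf{y},\boldsymbol{\mathcal{U}})]$ --- an EM-style M-step condition that is missing the $\log p_{\theta}(\boldsymbol{\mathcal{U}})$ term, because the frozen discriminator has absorbed the prior at $\theta^*$ --- and this does not by itself imply that $\theta^*$ maximizes $\log p_{\theta}(\mathbf{y})$. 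So if you want to handle the $\theta$-dependent prior honestly, you need a genuinely new argument for (a); otherwise you should state, as the paper implicitly does (following Proposition 3 of the adversarial variational Bayes setting, where the prior is hyperparameter-free), that $p(\boldsymbol{\mathcal{U}})$ is treated as fixed in the game.
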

Its proof is similar to that of Proposition $3$ in~\cite{mescheder2017adversarial} except that we
additionally provide a proof of existence of a Nash equilibrium for the case of known/fixed DGP model hyperparameters, as detailed in Appendix~\ref{append: proof of nash equilibrium}. Proposition~\ref{prop: nash equilibrium} reveals that any Nash equilibrium coincides with a global maximizer of the original ELBO  in~\eqref{eq: phi elbo}. 
This consequently inspires us to play the game using \emph{best-response dynamics}\footnote{This procedure is sometimes called ``better-response dynamics'' (\url{http://timroughgarden.org/f13/l/l16.pdf}).} (BRD) which is a commonly adopted procedure~\cite{awerbuch2008fast} to search for a Nash equilibrium.
%
Fig.~\ref{fig: game} illustrates our BRD algorithm: 
In each iteration of Algorithm~\ref{alg: Main1}, each player takes its turn to improve its strategy to achieve a better (but not necessarily the best) payoff by performing a \emph{stochastic gradient ascent} (SGA) update on its payoff~\eqref{eq: payoff}.
%
        
        \begin{figure}   
            \begin{tabular}{ll}
                \begin{minipage}{0.40\linewidth}    
                \begin{algorithm}[H]
                \label{alg: Main1}                
                Randomly initialize $\theta$, $\Psi$, $\Phi$\\
                \While{not converged}
                {
                   Run Algorithm~\ref{alg: D}\\
                   Run Algorithm~\ref{alg: G&GP}
                }
                \caption{Main}
                \end{algorithm}
                \end{minipage}
                &
                \begin{minipage}{0.56\linewidth}
                
                    \centering
                    \scalebox{0.6}{\hspace{-28mm}
                    \begin{tikzpicture}
                        \node[circle,fill=yellow!100] (0) at (-10cm,0cm) {\begin{tabular}{c}$\{\Psi^{0}\}$\vspace{1mm} \\ $\{\theta^{0},\Phi^{0}\}$ \end{tabular}};
                        \node[circle,fill=yellow!100] (f1) at (-7cm,0cm) {\begin{tabular}{c}$\{\Psi^{1}\}$\vspace{1mm} \\ $\{\theta^{0},\Phi^{0}\}$ \end{tabular}};
                        \path[->] (0) edge (f1);
                        \node[text width=10cm, text centered] at (-8.5cm, .8cm) {\textbf{Player 1}};

                        \node[text width=10cm, text centered] at (-5.5cm, .8cm) {\textbf{Player 2}}; 
                        \node[circle,fill=yellow!100] (tp1) at (-4cm,0cm) {\begin{tabular}{c}$\{\Psi^{1}\}$\vspace{1mm} \\ $\{\theta^{1},\Phi^{1}\}$ \end{tabular}};
            
                        \node[circle,fill=yellow!100] (f2) at (-1cm,-0cm) {\begin{tabular}{c}$\{\Psi^{2}\}$\vspace{1mm} \\ $\{\theta^{1},\Phi^{1}\}$ \end{tabular}};
                        \path[->] (tp1) edge (f2);
                        \path[->] (f1) edge (tp1);
                        \node[text width=10cm, text centered] at (-2.5cm, .8cm) {\textbf{Player 1}}; 
                        \node[circle] (y) at (2cm,-0cm) {$\dots$};
                        \path[->] (f2) edge (y);
                        \node[text width=10cm, text centered] at (.5cm, .8cm) {\textbf{Player 2}};
                    \end{tikzpicture}   }  
            \end{minipage}
        \end{tabular}
        \begin{tabular}{ll}
            \begin{minipage}{0.48\linewidth}    
            \begin{algorithm}[H]
                \label{alg: D}            
                Sample $\{\boldsymbol{\mathcal{V}}_1,\ldots,\boldsymbol{\mathcal{V}}_K\}$ from $p(\boldsymbol{\mathcal{U}})$ \\
                Sample $\{\boldsymbol{\mathcal{U}}_1,\ldots,\boldsymbol{\mathcal{U}}_K\}$ from $q_{\Phi}(\boldsymbol{\mathcal{U}})$ \\
                
                Compute gradient w.r.t. $\Psi$ from~\eqref{eq: payoff}: \\
    $
                \begin{array}{rl}
                g_{\Psi} \triangleq \hspace{-2.4mm}& \displaystyle\nabla_{\Psi}\hspace{-1mm} \left[ \frac{1}{K}\sum_{k=1}^{K}\log (1-\sigma(T_{\Psi}(\boldsymbol{\mathcal{V}}_k))\right] \\
                &\displaystyle+\nabla_{\Psi}\hspace{-1mm} \left[ \frac{1}{K}\sum_{k=1}^{K}\log \sigma(T_{\Psi}(\boldsymbol{\mathcal{U}}_k))\right]
                \end{array}
                $\\
                SGA update for $\Psi$: \\
                $
                \begin{array}{l}
                \Psi \leftarrow \Psi + \alpha_{\Psi}\ g_{\Psi}
                \end{array}
                $ 
                
                
                \caption{\textbf{Player 1}}
            \end{algorithm}
            \end{minipage}
                &
                \begin{minipage}{0.46\linewidth}
                \begin{algorithm}[H]
                    \label{alg: G&GP}                
                        Sample mini-batch $(\mathbf{X}_b, \mathbf{y}_b)$ from $(\mathbf{X}, \mathbf{y})$ \\                        
                        Sample $\{\boldsymbol{\mathcal{U}}_1,\ldots,\boldsymbol{\mathcal{U}}_K\}$ from $q_{\Phi}(\boldsymbol{\mathcal{U}})$
        
                        Compute gradients w.r.t. $\theta$ and $\Phi$ from~\eqref{eq: payoff}:\\
                        $\hspace{-1.7mm}
                        \begin{array}{rl}
g_{\theta}  \triangleq \hspace{-2.4mm}& \displaystyle\nabla_{\theta}\hspace{-1mm}\left[\frac{1}{K}\hspace{-0.5mm}\sum_{k=1}^{K}  \mathcal{L}(\theta,\mathbf{X}_b, \mathbf{y}_b, \boldsymbol{\mathcal{U}}_k)\right]\\
                        g_{\Phi} \triangleq \hspace{-2.4mm}& \displaystyle\nabla_{\Phi}\hspace{-1mm}\left[\frac{1}{K}\hspace{-0.5mm}\sum_{k=1}^{K}  \mathcal{L}(\theta,\mathbf{X}_b, \mathbf{y}_b, \boldsymbol{\mathcal{U}}_k)\hspace{-0.5mm} -\hspace{-0.5mm} T_{\Psi}(\boldsymbol{\mathcal{U}}_k)\right] \\
                        \end{array}
                        $\\
                        SGA updates for $\theta$ and $\Phi$: \\
                        $
                        \begin{array}{l}
                        \theta \leftarrow \theta + \alpha_{\theta}\  g_{\theta}\ ,\quad \Phi \leftarrow \Phi + \alpha_{\Phi}\ g_{\Phi}
                        \end{array}
                        $                        
                    \caption{\textbf{Player 2}}
                \end{algorithm}
                \end{minipage}
        \end{tabular}
        \caption{\emph{Best-response dynamics} (BRD) algorithm based on our IPVI framework for DGPs.}
        \label{fig: game}
        \end{figure}
%
\begin{remark}
While BRD guarantees to converge to a Nash equilibrium in some classes of games (e.g., a finite potential game), we have not shown that our game falls into any of these classes and hence cannot guarantee that BRD converges to a Nash equilibrium (i.e., global maximizer $\{\theta^*, \Phi^*\}$) of our game.
Nevertheless, as mentioned previously, obtaining the optimal discriminator in every iteration guarantees the game play (i.e., gradient ascent update for $\{\theta,\Phi\}$) to reach at least a local maximum of ELBO. To better approximate the optimal discriminator, we perform multiple calls of Algorithm~\ref{alg: D}             in every iteration of the main loop in Algorithm~\ref{alg: Main1} and also apply a larger learning rate $\alpha_\Psi$.
We have observed in our own experiments that these tricks improve the predictive performance of IPVI. 
\end{remark}

\begin{remark}
    \textcolor{red}{Existing implicit VI frameworks~\cite{titsias2019unbiased,yin2018semi} avoid the estimation of the log-density ratio. Unfortunately, the semi-implicit VI framework of  \cite{yin2018semi} requires taking a limit at infinity to recover the ELBO, while the unbiased implicit VI framework of~\cite{titsias2019unbiased} relies on a Markov chain Monte Carlo sampler whose hyperparameters need to be carefully tuned.}  
\end{remark}

%
\section{Parameter-Tying Architecture of Generator and Discriminator for DGPs} 
\label{sec: arichitecture}
\begin{figure}
      \begin{tabular}{llll}
          \hspace{-2mm}\includegraphics[height=27mm]{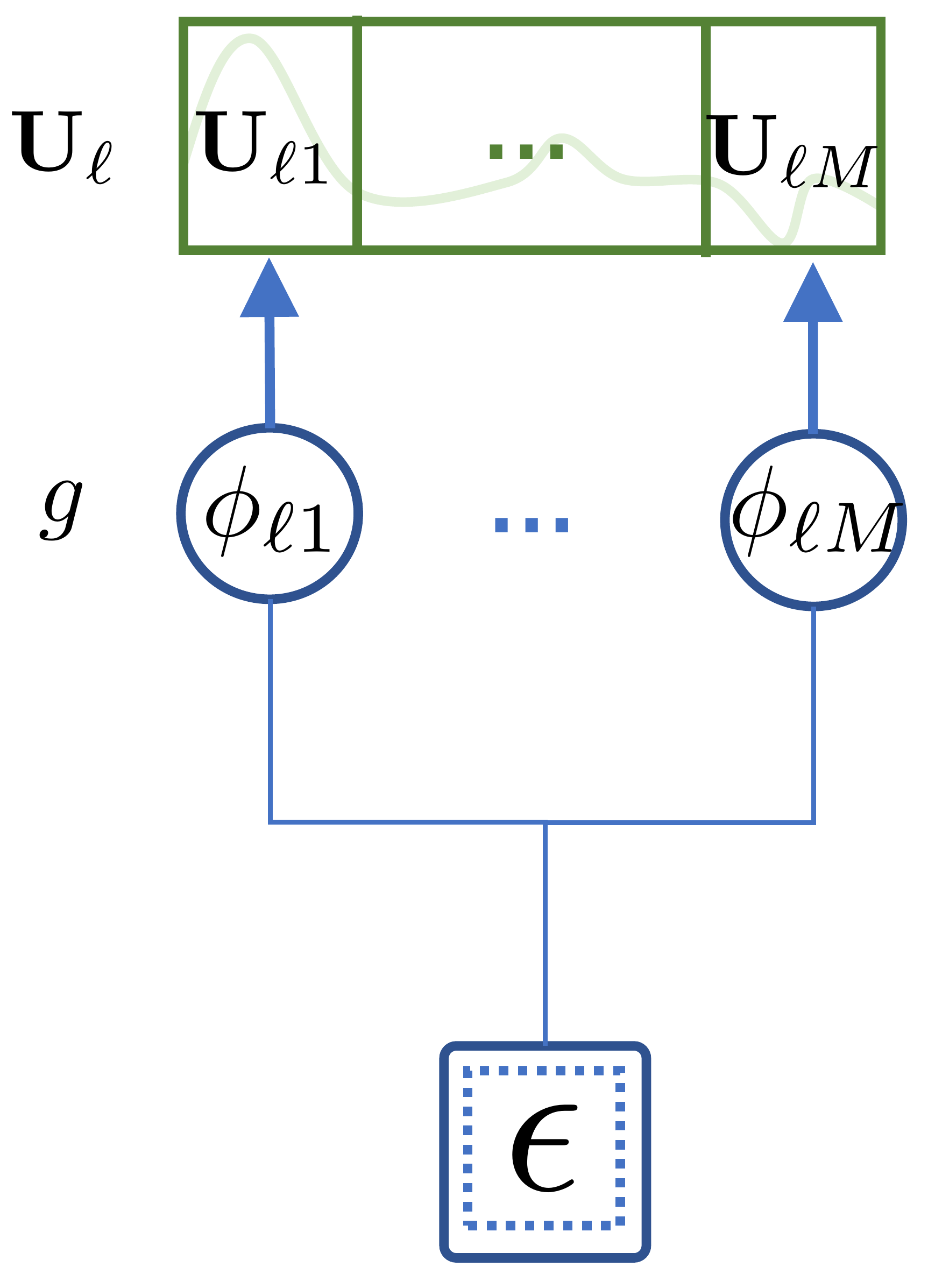} 
          & 
          \hspace{-2mm}\includegraphics[height=27mm]{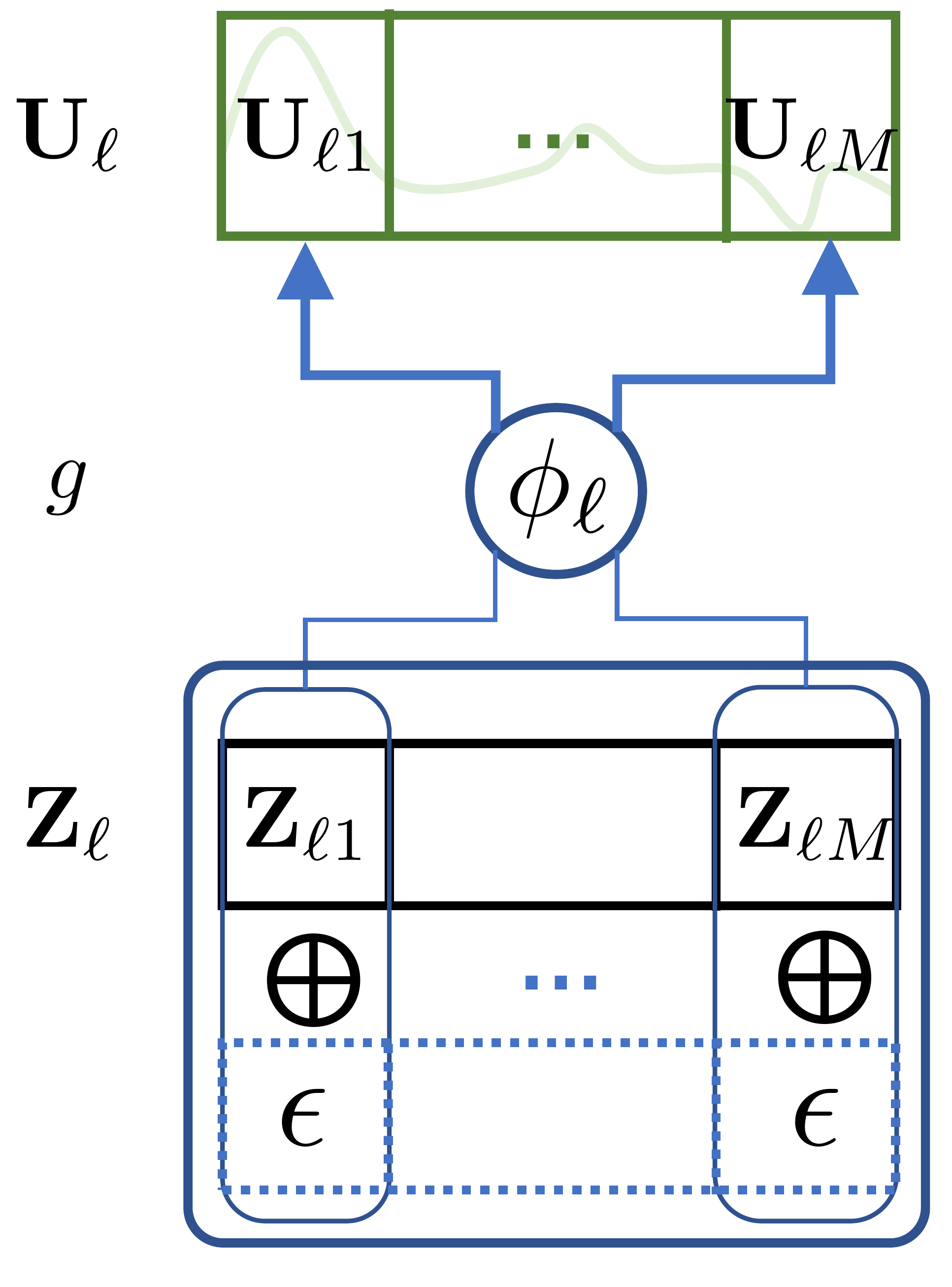} 
          & 
          \hspace{-1mm}\includegraphics[height=27mm]{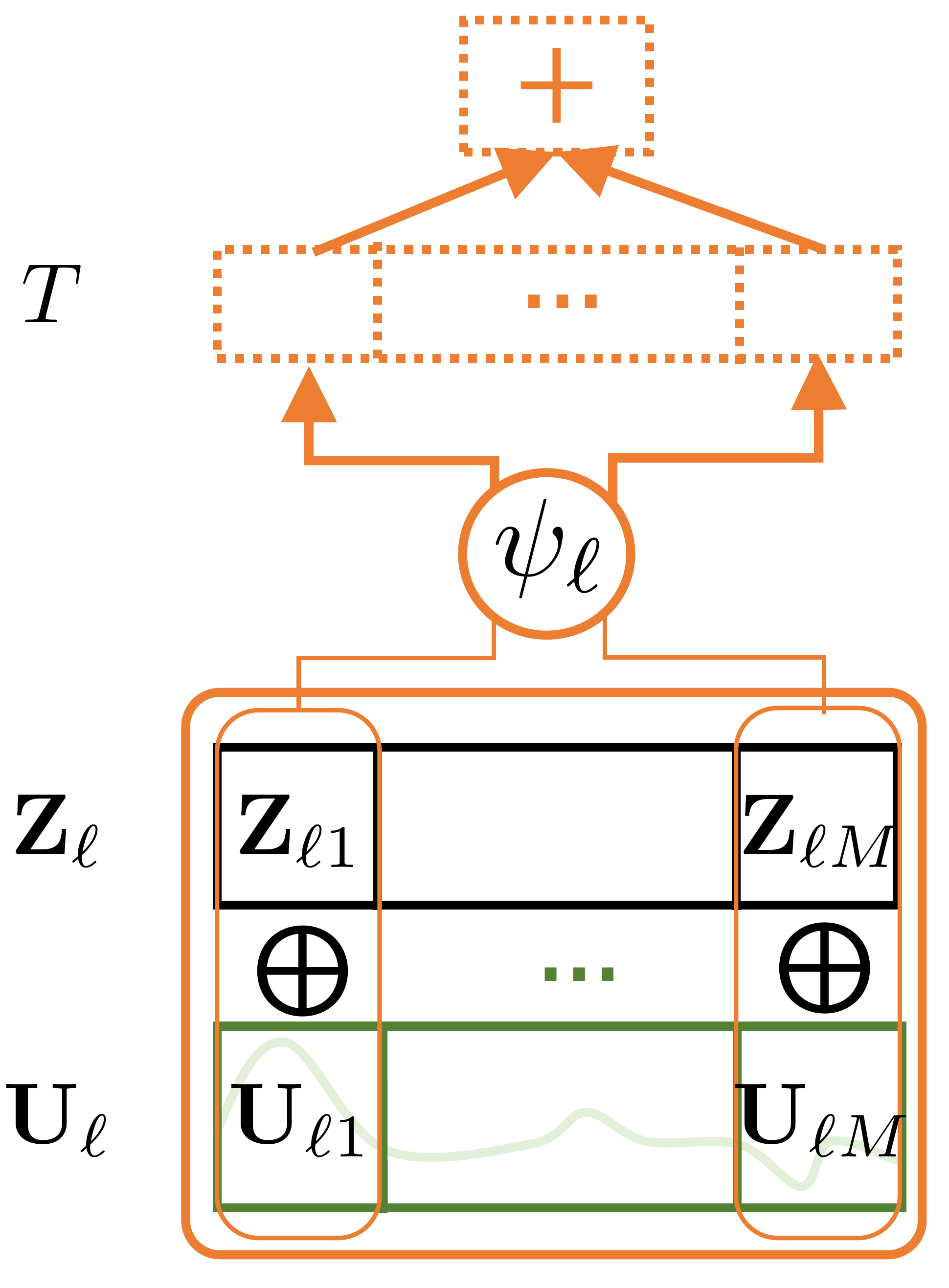}
          &
          \hspace{-0mm}\includegraphics[height=27mm]{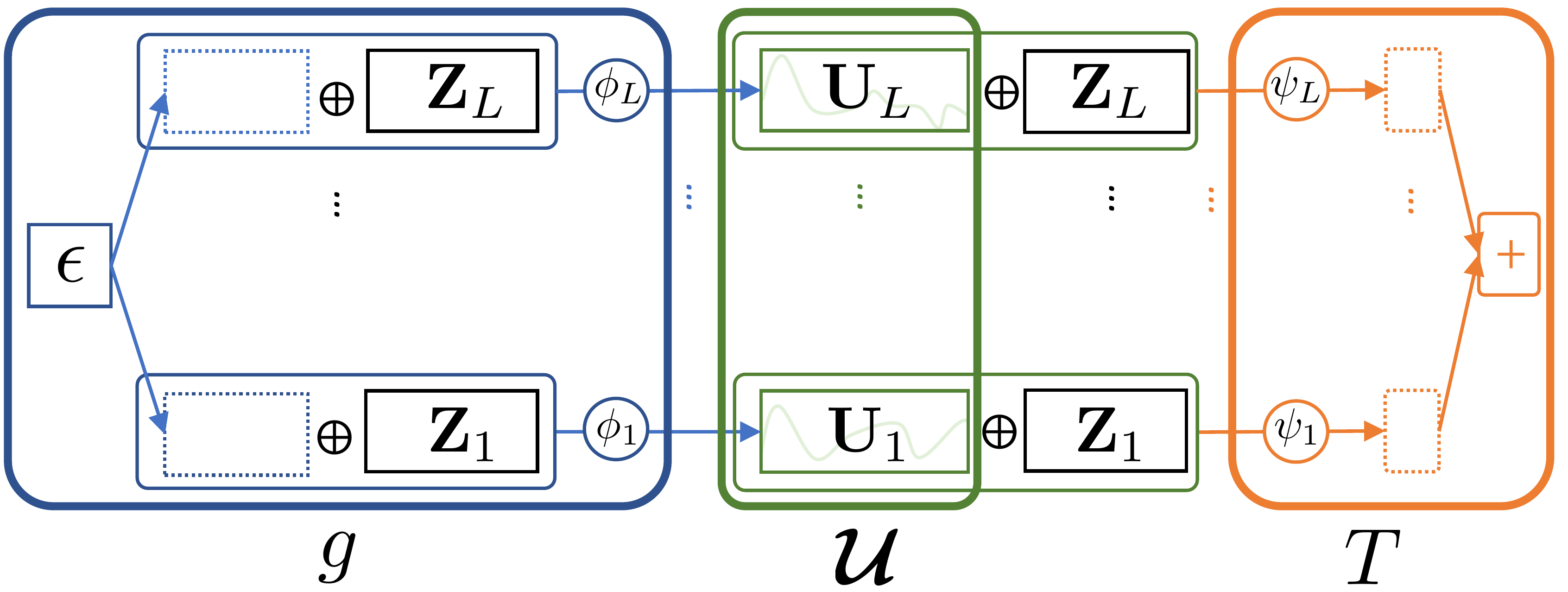}
          \\
          \hspace{7.5mm}(a) & \hspace{7.6mm}(b) & \hspace{7.4mm} (c) & \hspace{37.5mm}(d)
      \end{tabular}
      \caption{ (a) Illustration of a naive design of the generator for each layer $\ell$.
      Parameter-tying architecture of the (b) generator and (c) discriminator for each layer $\ell$ where 
       `$+$' denotes addition and `$\oplus$' denotes concatenation.
       (d) Parameter-tying architecture of the generator and discriminator in our IPVI framework for DGPs. 
See the main text for the definitions of notations.}
      \label{fig:architecture}
    \end{figure}
In this section, we will discuss how the architecture of the generator 
in our IPVI framework is designed to enable parameter tying for a DGP model to alleviate overfitting. 
Recall from Section~\ref{subsec: deep gaussian process} that $\boldsymbol{\mathcal{U}} = \{\mathbf{U}_{\ell}\}_{\ell=1}^{L}$ is a collection of inducing variables for DGP layers $\ell=1,\ldots,L$. 
We consider a layer-wise design of the generator (parameterized by $\Phi\triangleq\{\phi_{\ell}\}_{\ell=1}^{L}$) and discriminator (parameterized by $\Psi\triangleq\{\psi_{\ell}\}_{\ell=1}^{L}$)
such that $g_{\Phi}(\epsilon) \triangleq \{g_{\phi_{\ell}}(\epsilon)\}_{\ell=1}^{L}$ with the random noise $\epsilon$ serving as a common input to induce dependency between layers
and $T_{\Psi}(\boldsymbol{\mathcal{U}}) \triangleq \sum_{\ell=1}^{L}T_{\psi_{\ell}}(\mathbf{U}_{\ell})$, respectively. 
For each layer $\ell$, a naive design is to generate posterior samples $\mathbf{U}_{\ell}\triangleq g_{\phi_{\ell}}(\epsilon)$ from the random noise $\epsilon$ as input.
However, such a design suffers from two critical issues:
\textcolor{red}{
\squishlisttwo
    \item[$\bullet$] Fig.~\ref{fig:architecture}a illustrates that to generate posterior samples of $M$ different inducing variables $\mathbf{U}_{\ell 1},\ldots,\mathbf{U}_{\ell M}$  ($\mathbf{U}_{\ell}\triangleq\{\mathbf{U}_{\ell m} \}^{M}_{m=1}$), it is natural for the generator to adopt $M$ different parametric settings $\phi_{\ell 1},\ldots, \phi_{\ell M}$ ($\phi_{\ell}\triangleq\{ \phi_{\ell m}\}^M_{m=1}$), which introduces a relatively large number of parameters and is thus prone to overfitting (Section~\ref{subsec: tying vs no tying}).
    \item[$\bullet$] Such a design of the generator fails to adequately capture the dependency of the inducing output variables $\mathbf{U}_{\ell}$ on the corresponding inducing inputs $\mathbf{Z}_{\ell}$, hence restricting its capability to output the posterior samples of $\boldsymbol{\mathcal{U}}$ accurately.
\squishend
}

To resolve the above issues, we propose a novel parameter-tying architecture of the generator and discriminator for a DGP model, as shown in Figs.~\ref{fig:architecture}b and~\ref{fig:architecture}c. 
For each layer $\ell$, since $\mathbf{U}_\ell$ depends on $\mathbf{Z}_\ell$, 
we design the generator $g_{\phi_\ell}$ to generate posterior samples $\mathbf{U}_\ell \triangleq g_{\phi_\ell}(\epsilon \oplus \mathbf{Z}_\ell)$ from not just $\epsilon$ but also $\mathbf{Z}_\ell$ as inputs. \textcolor{red}{Recall that the same $\epsilon$ is fed as an input to $g_{\phi_l}$ in each layer $\ell$, which can be observed from the left-hand side of Fig.~\ref{fig:architecture}d.} In addition, compared with the naive design in Fig.~\ref{fig:architecture}a, the posterior samples of $M$ different inducing variables $\mathbf{U}_{\ell 1},\ldots,\mathbf{U}_{\ell M}$ are generated based on only a single shared parameter setting (instead of $M$), 
which reduces the number of parameters by $\mathcal{O}(M)$ times (Fig.~\ref{fig:architecture}b). 
%
%
We adopt a similar design for the discriminator, as shown in Fig.~\ref{fig:architecture}c. Fig.~\ref{fig:architecture}d illustrates the design of the overall parameter-tying architecture of the generator and discriminator. 

We have observed in our own experiments that our proposed parameter-tying architecture not only speeds up the training and prediction, but also improves the predictive performance of IPVI considerably (Section~\ref{subsec: tying vs no tying}).
We will empirically evaluate our IPVI framework with this parameter-tying architecture in Section~\ref{sec: experiments}.
%
%
%
\section{Experiments and Discussion} 
\label{sec: experiments}
%
%
We empirically evaluate and compare the performance of our IPVI framework\footnote{\textcolor{red}{Our implementation is built on GPflow \cite{GPflow2017} which is an open-source GP framework based on TensorFlow \cite{tf2016}. It is publicly available at \url{https://github.com/HeroKillerEver/ipvi-dgp}.}
} 
against that of the state-of-the-art 
SGHMC~\cite{havasi2018inference} and \emph{doubly stochastic VI}\footnote{It is reported in~\cite{salimbeni2017doubly} that DSVI has outperformed the approximate expectation propagation method of~\cite{bui2016deep} for DGPs. Hence, we do not empirically compare with the latter~\cite{bui2016deep} here.} (DSVI)~\cite{salimbeni2017doubly} for DGPs based on their publicly available implementations using synthetic and real-world datasets in supervised (e.g., regression and classification) and unsupervised learning tasks. 
    \subsection{Synthetic Experiment: Learning a Multi-Modal Posterior Belief} 
    \label{subsec: synthetic multi-modal data}
        To demonstrate the capability of IPVI in learning a complex multi-modal posterior belief, we generate a synthetic ``diamond'' dataset and adopt a multi-modal mixture of Gaussian prior belief $p(\mathbf{f})$ (see Appendix~\ref{append: toy experiment} for its description) to yield a multi-modal posterior belief $p(\mathbf{f}|\mathbf{y})$ for a single-layer GP. Fig.~\ref{fig: synthetic data}a illustrates this dataset and ground-truth posterior belief.         
Specifically, we focus on the multi-modal posterior belief $p(f|\mathbf{y}; x=0)$ at input $x=0$ whose ground truth is shown in Fig.~\ref{fig: synthetic data}d.
        Fig.~\ref{fig: synthetic data}c shows that as the number of parameters in the generator increases, the expressive power of IPVI increases such that its variational posterior $q(f;x=0)$ can capture more modes in the true posterior, thus resulting in a closer estimated \emph{Jensen-Shannon divergence} (JSD) between them and a higher \emph{mean log-likelihood} (MLL). 
        
Next, we compare the robustness of IPVI and SGHMC in learning the true multi-modal posterior belief $p(f|\mathbf{y};x=0)$ under different hyperparameter settings\footnote{We adopt scale-adapted SGHMC which is a robust variant used in Bayesian neural networks and DGP inference \cite{havasi2018inference}. A recent work of~\cite{zhang2019cyclical} has proposed the cyclical stochastic gradient MCMC method to improve the 
 accuracy of sampling highly complex distributions. However, it is not obvious to us how this method can be incorporated
into DGP models, which is beyond the scope of this work.}: 
The generators in IPVI use the same architecture with about $300$ parameters but 
different learning rates $\alpha_\Psi$, while the SGHMC samplers use different step sizes $\eta$. 
The results in Figs.~\ref{fig: synthetic data}b and~\ref{fig: synthetic data}e have verified a remark made in~\cite{zhang2019cyclical} that  
SGHMC is sensitive to the step size which cannot be set automatically~\cite{springenberg2016bayesian} and requires some prior knowledge to do so: 
Sampling with a small step size is prone to getting trapped in local modes while a slight increase of the step size may lead to an over-flattened posterior estimate.
Additional results for different hyperparameter settings of SGHMC can be found in Appendix~\ref{append: toy experiment}.
In contrast, the results in Figs.~\ref{fig: synthetic data}b and~\ref{fig: synthetic data}d reveal that, given enough parameters, IPVI performs robustly under a wide range of learning rates.
    \begin{figure}
         \begin{tabular}{lll}
            \hspace{5mm}
            \begin{tabular}{l}
                \hspace{-9mm} \includegraphics[height=32mm]{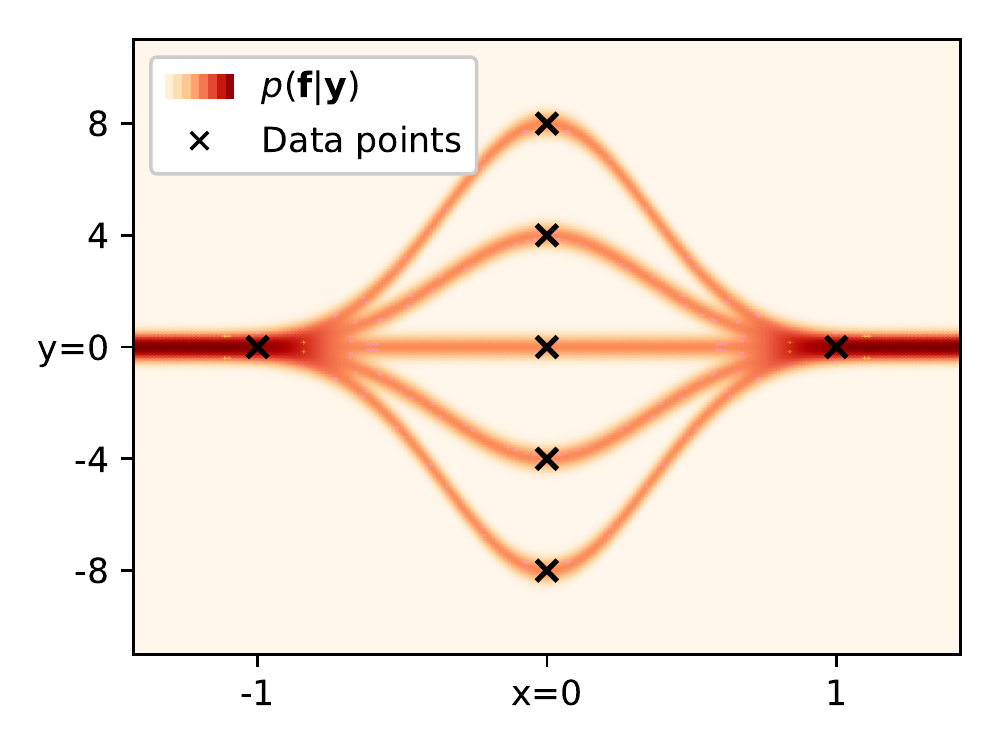}
                \vspace{-0mm}
                \\
                \vspace{-0mm}
                \hspace{13mm}(a) 
                \\
                \hspace{-8mm} \includegraphics[height=32mm]{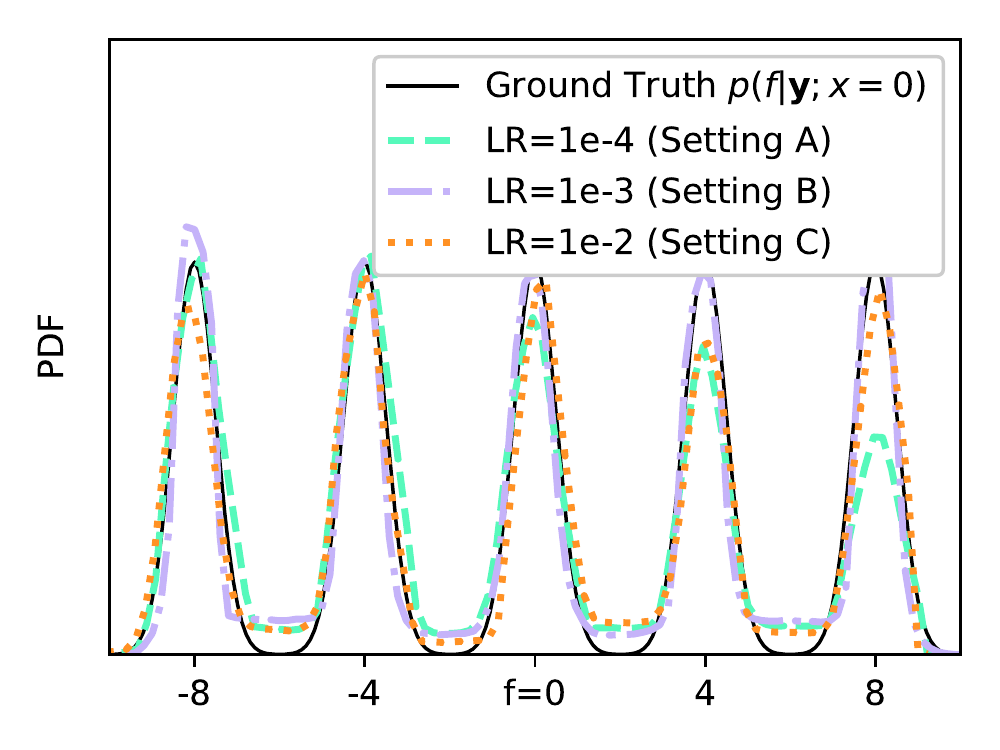}
                \vspace{-0mm}
                \\
                \hspace{13mm}(d)
            \end{tabular}
            &
            \hspace{-3mm}
            \vspace{0mm}
            \begin{tabular}{l}
            \hspace{-6mm}
                \hspace{0mm} \includegraphics[height=32mm]{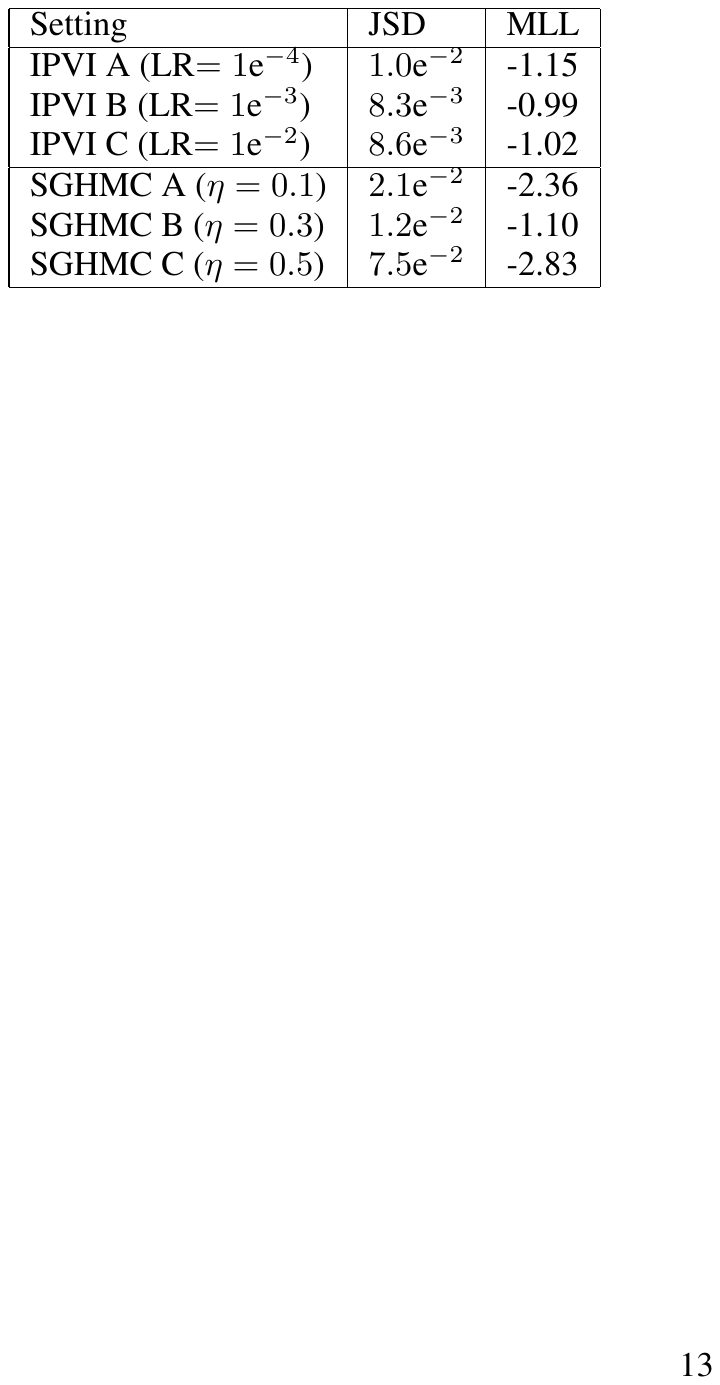}
            \vspace{-0mm}
            \\
            \hspace{13mm}(b)
            \\
            \hspace{-8mm} \includegraphics[height=32mm]{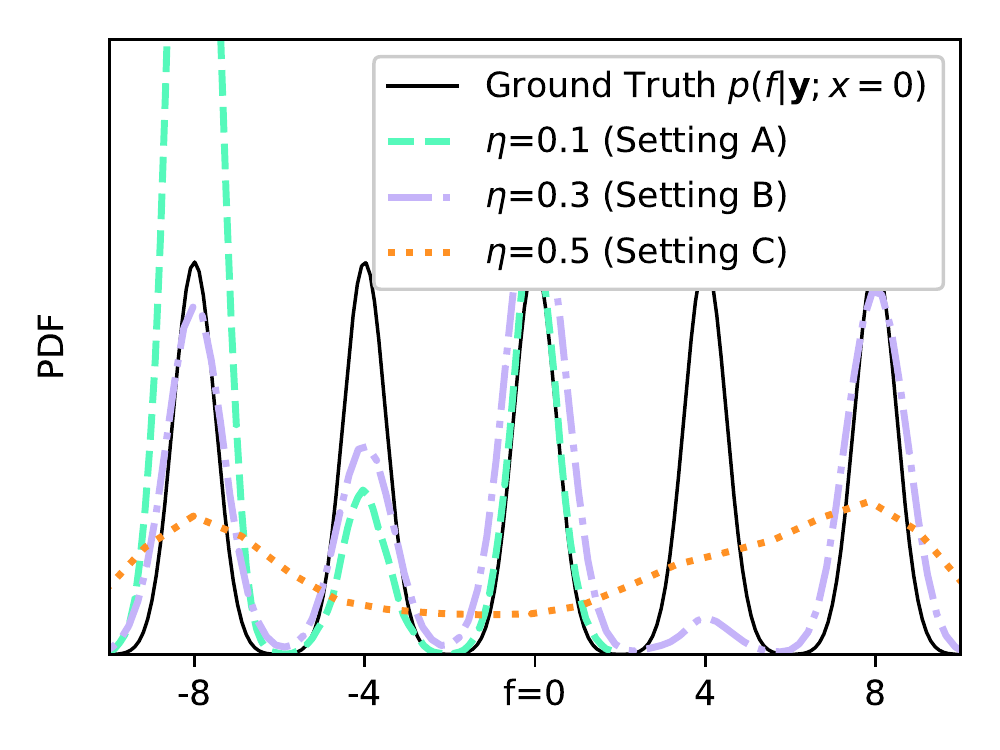}
            \\
            \hspace{13mm}(e)
            \end{tabular}
            &
            \hspace{-10.5mm} 
            \begin{tabular}{l}
            \includegraphics[height=70mm]{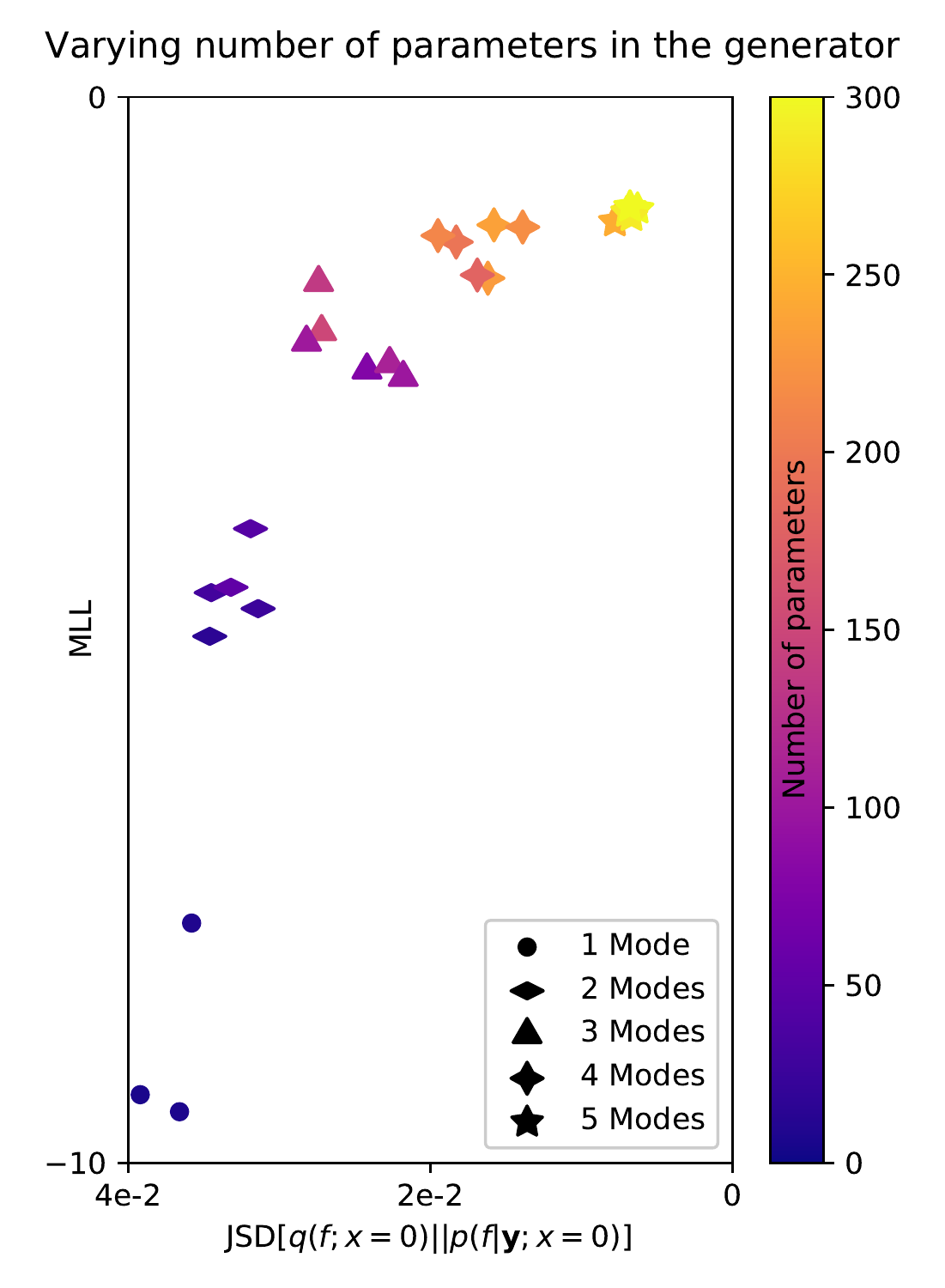}
            \vspace{-1mm}
            \\
            \hspace{22mm}(c)
            \end{tabular}
        \end{tabular}
        \caption{(a) The \emph{probability density function} (PDF) plot of the ground-truth posterior belief $p(\mathbf{f}|\mathbf{y})$. 
(b) Performances of IPVI and SGHMC in terms of estimated \emph{Jenson-Shannon divergence} (JSD) and \emph{mean log-likelihood} (MLL) metrics under the respective settings of varying learning rates $\alpha_\Psi$ and step sizes $\eta$.
(c) Graph of MLL vs. JSD achieved by IPVI with varying number of parameters in the generator: Different shapes indicate varying number of modes learned by the generator.
        (d-e) PDF plots of variational posterior $q(f;x=0)$ learned using (d) IPVI with generators of varying learning rates $\alpha_\Psi$ and (e) SGHMC with varying step sizes $\eta$.}
        \label{fig: synthetic data}
    \end{figure}    
\subsection{Supervised Learning: Regression and Classification} 
\label{subsec: uci benchmark regression}
For our experiments in the regression tasks, the depth $L$ of the DGP models are varied from $1$ to $5$ with $128$ inducing inputs per layer. The dimension of each hidden DGP layer is set to be (i) the same as the input dimension for the UCI benchmark regression and Airline datasets, (ii) $16$ for the YearMSD dataset, and (iii) $98$ for the classification tasks.
Additional details and results for our experiments (including that for IPVI with and without parameter tying) are found in Appendix~\ref{crappish}.

    
    \textbf{UCI Benchmark Regression.} Our experiments are first conducted on $7$ UCI benchmark regression datasets. We have performed a random $0.9/0.1$ train/test split.
        
    \textbf{Large-Scale Regression.} We then evaluate the performance of IPVI on two real-world large-scale regression datasets: (i) YearMSD dataset with a large input dimension $D=90$ and data size $N\approx 500000$, and (ii) Airline dataset with input dimension $D=8$ and a large data size $N\approx2$~million.
    For YearMSD dataset, we use the first $463715$ examples as training data and the last $51630$ examples as test data\footnote{This avoids the `producer' effect by ensuring that no song from an artist appears in both training \& test data.}.
    For Airline dataset, we set the last $100000$ examples as test data. 
    
    In the above regression tasks, the performance metric is the MLL of the test data (or test MLL). Fig.~\ref{fig:uci} shows results of the test MLL and standard deviation over $10$ runs. It can be observed that IPVI generally outperforms SGHMC and DSVI and the ranking summary shows that our IPVI framework for a $2$-layer DGP model (IPVI DGP 2) performs the best on average across all regression tasks. For large-scale regression tasks, the performance of IPVI consistently increases with a greater depth. Even for a small dataset, the performance of IPVI improves up to a certain depth.
 %
%
%
%
    \begin{figure}
    \hspace{2mm}
    \begin{tabular}{lllll}
        \hspace{-6mm} \includegraphics[height=50mm]{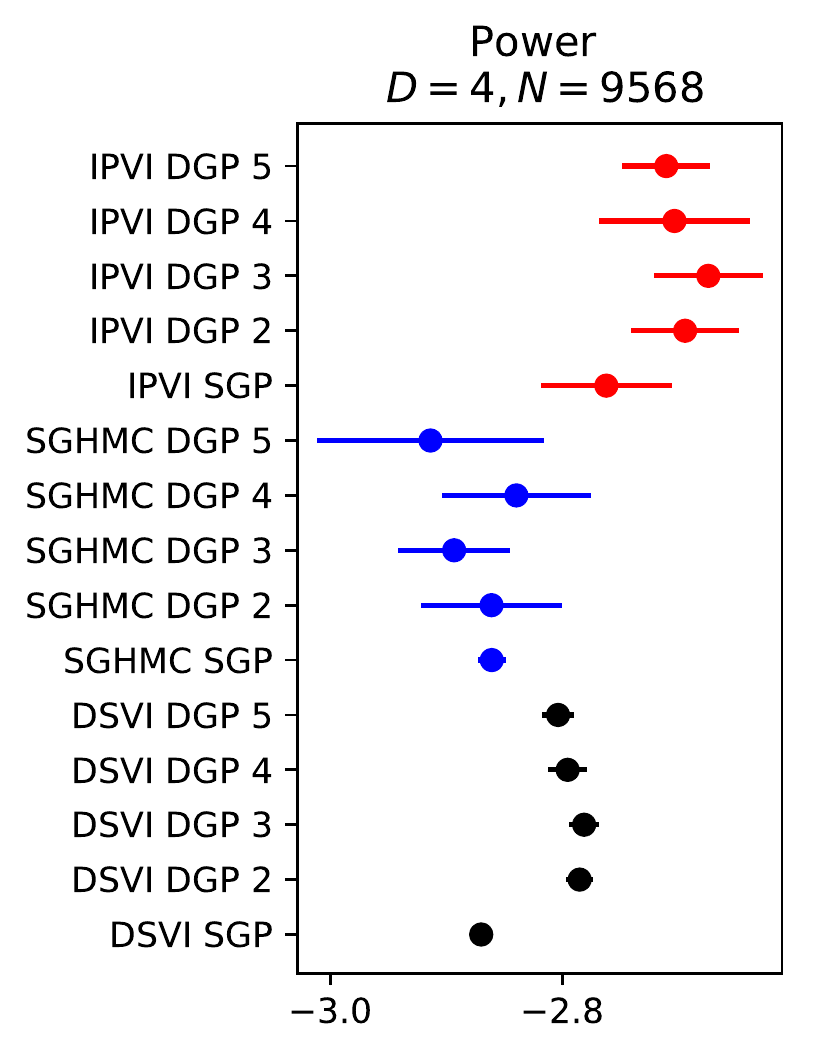}
        & \hspace{-6mm} \includegraphics[height=50mm]{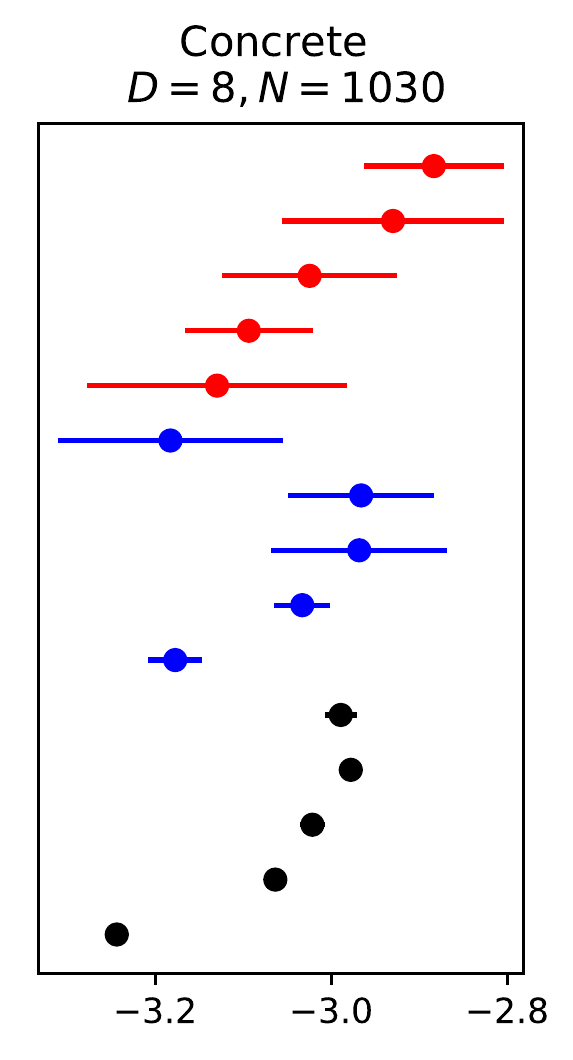}
        & \hspace{-6mm} \includegraphics[height=50mm]{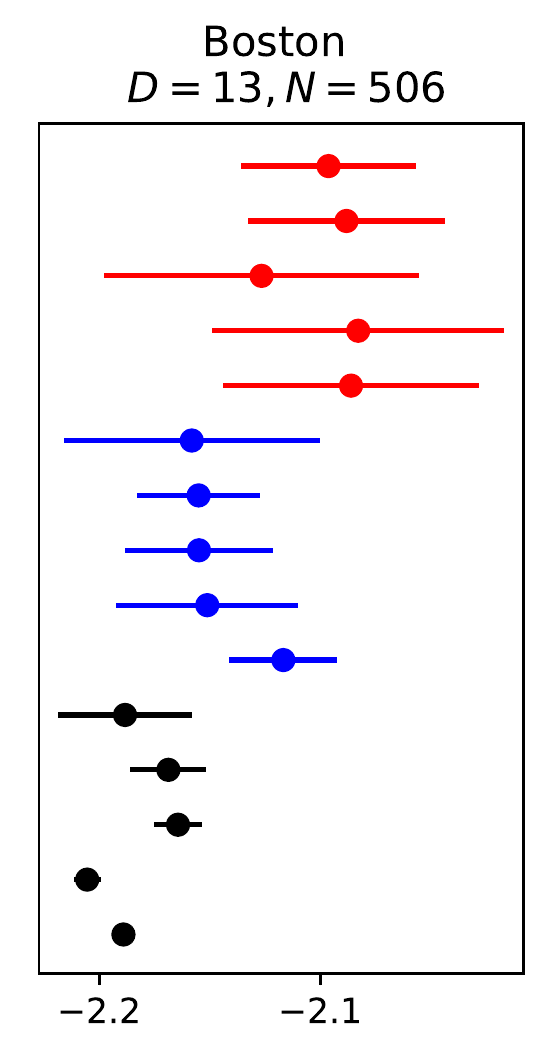}
        & \hspace{-6mm} \includegraphics[height=50mm]{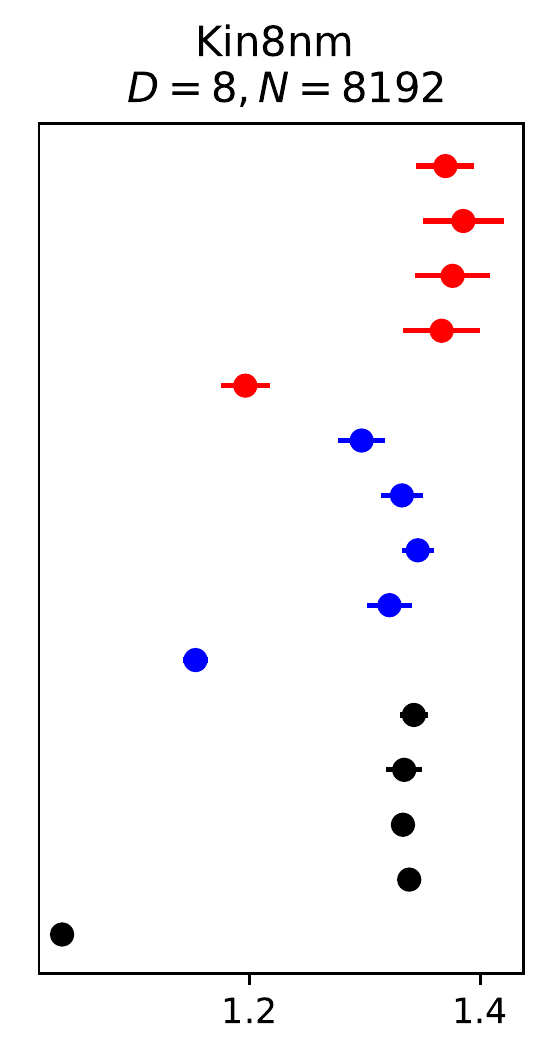}
        & \hspace{-6mm} \includegraphics[height=50mm]{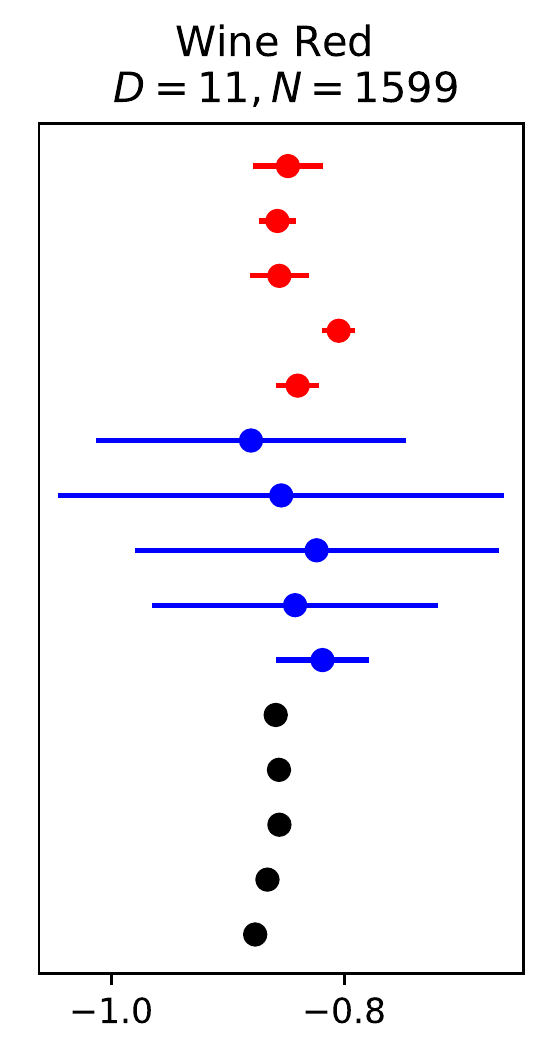}
        \\
         \hspace{-6mm} \includegraphics[height=50mm]{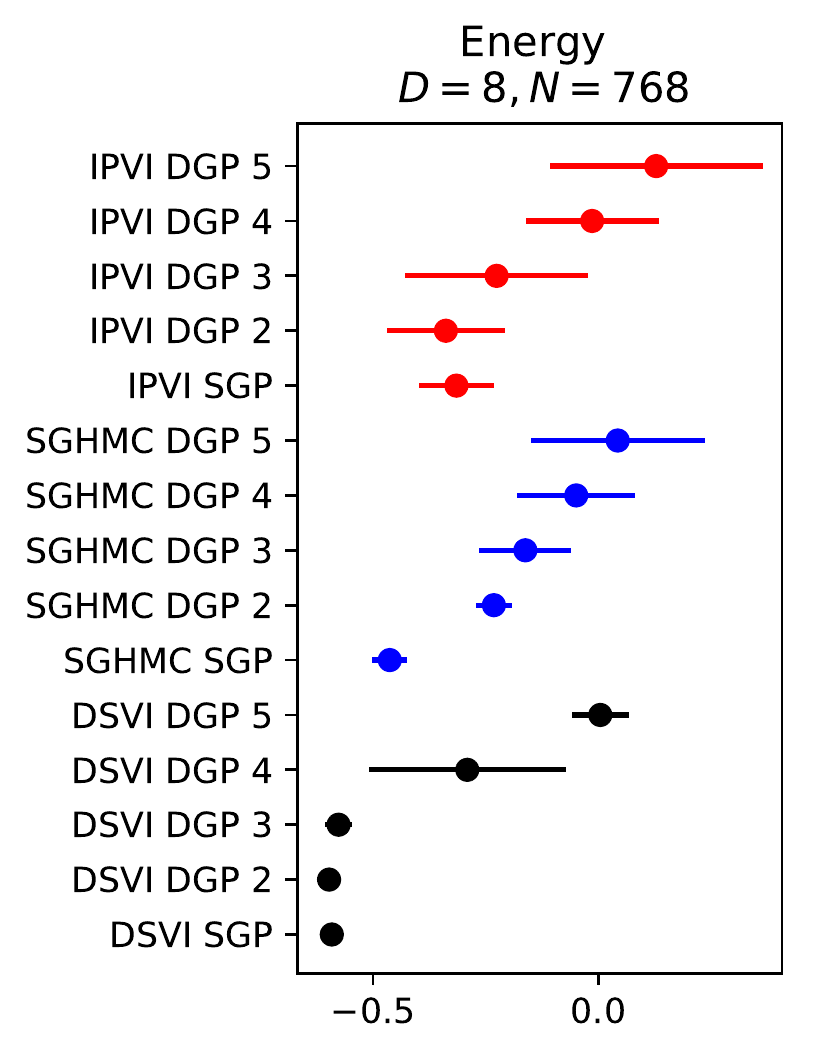}
        & \hspace{-6mm} \includegraphics[height=50mm]{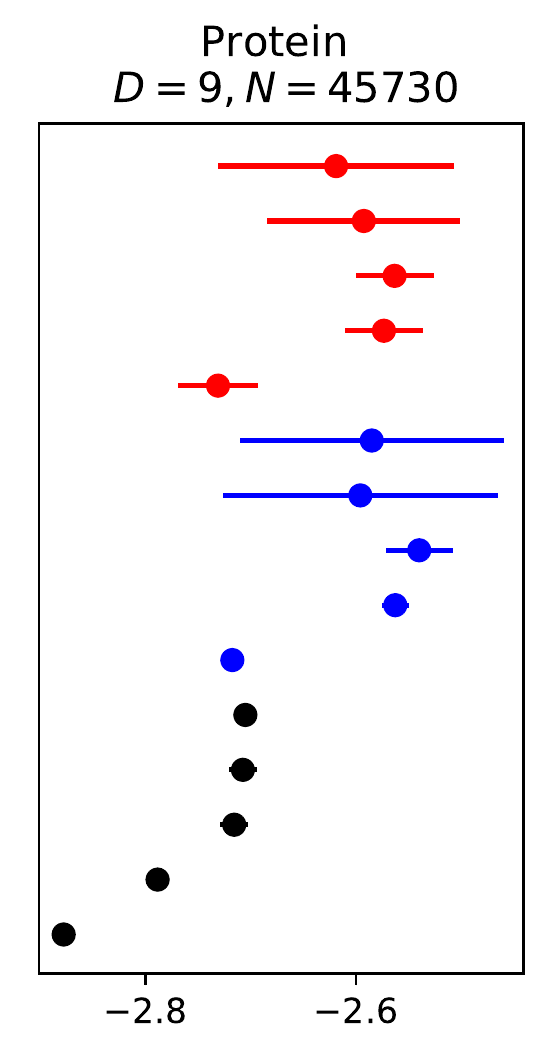}
        & \hspace{-6mm} \includegraphics[height=50mm]{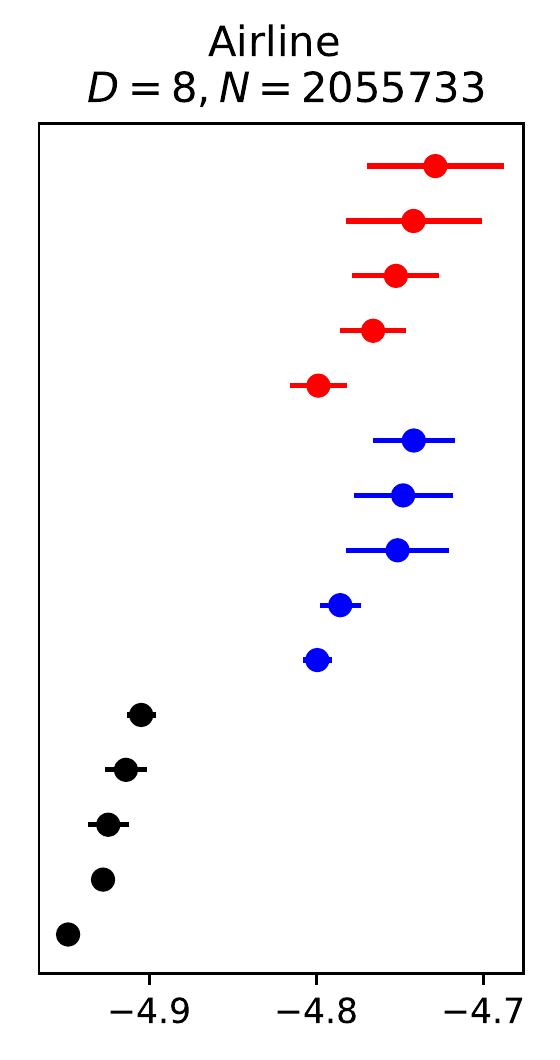}
        & \hspace{-6mm} \includegraphics[height=50mm]{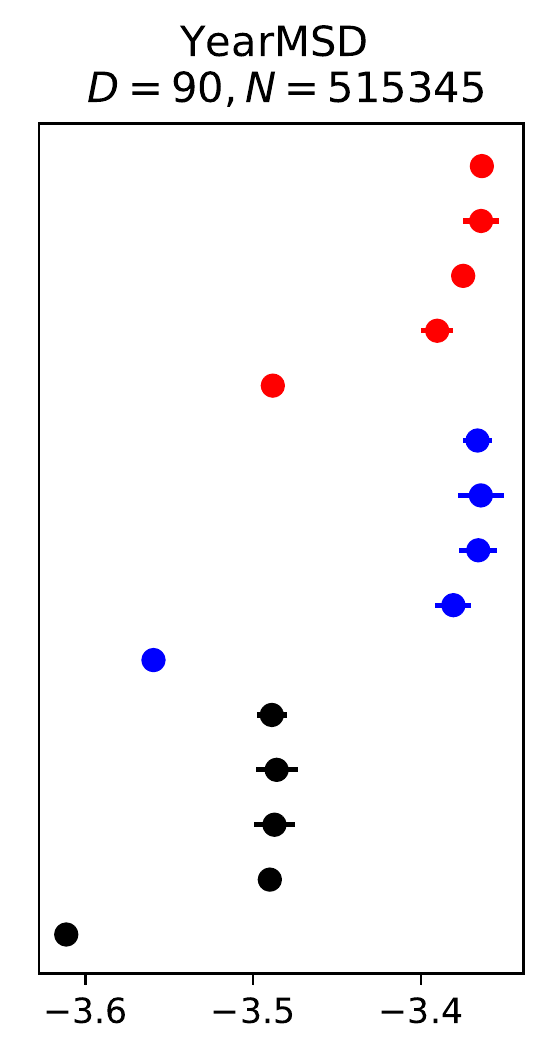}
        & \hspace{-6mm} \includegraphics[height=50mm]{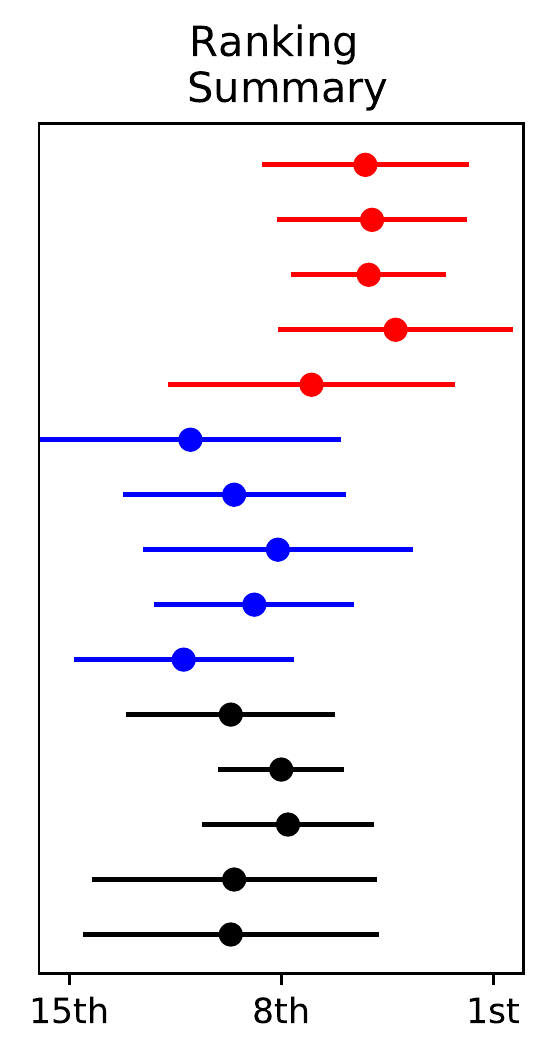}
        \end{tabular}
\caption{Test MLL and standard deviation achieved by our IPVI framework (red), SGHMC (blue), and DSVI (black) for DGPs for UCI benchmark and large-scale regression datasets. Higher test MLL (i.e., to the right) is better. See  Appendix~\ref{crappish} for a discussion on the performance gap between SGPs.}
    \label{fig:uci}
    \end{figure}

    \textbf{Time Efficiency.} Table~\ref{tab: runtime} and Fig.~\ref{fig: runtime} show the better time efficiency of IPVI over the state-of-the-art SGHMC for a $4$-layer DGP model that is trained using the Airline dataset. The learning rates are $0.005$ and $0.02$ for IPVI and SGHMC (default setting adopted from~\cite{havasi2018inference}), respectively.
    Due to parallel sampling (Section~\ref{sec: IPVI DGP}) and a parameter-tying architecture (Section~\ref{sec: arichitecture}), our IPVI framework enables posterior samples to be generated $500$ times faster. Although IPVI has more parameters than SGHMC, it runs $9$ times faster during training due to efficiency in sample generation.
     \begin{figure}
        \begin{minipage}{0.66\linewidth}
            \captionof{table}{Time incurred by a $4$-layer DGP model for Airline dataset.}
            \hspace{3mm}
                        \begin{tabular}{|l|cc|}
                \hline
                & IPVI  & SGHMC   \\ \hline
                Average training time (per iter.) & $0.35$~sec. & $3.18$~sec. \\
                $\boldsymbol{\mathcal{U}}$ generation ($100$ samples) & $0.28$~sec. & $143.7$~sec. \\ \hline
            \end{tabular}
            \label{tab: runtime}
        \end{minipage}
        \hspace{2mm}
        \begin{minipage}{0.31\linewidth}
                \includegraphics[width=43mm]{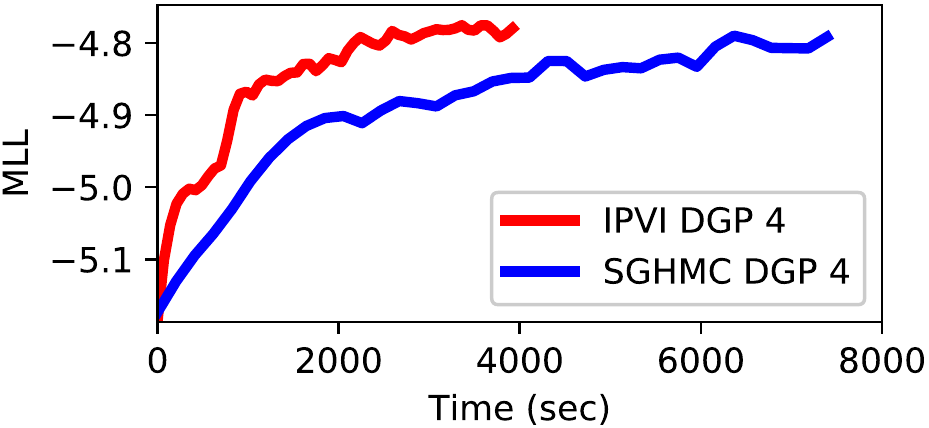}
               \vspace{-5mm}
                \caption{Graph of test MLL vs. total incurred time to train a $4$-layer DGP model for the Airline dataset.}
                \label{fig: runtime}
        \end{minipage}
    \end{figure}            
                 
    \textbf{Classification.} \label{subsec: real world classification} We evaluate the performance of IPVI in three classification tasks using the real-world MNIST, fashion-MNIST, and CIFAR-$10$ datasets. Both MNIST and fashion-MNIST datasets are grey-scale images of $28\times28$ pixels. The CIFAR-$10$ dataset consists of colored images of $32\times32$ pixels. We utilize a $4$-layer DGP model with $100$ inducing inputs per layer and a robust-max multiclass likelihood~\cite{hernandez2011robust}; \textcolor{red}{for MNIST dataset, we also consider utilizing a $4$-layer DGP model with $800$ inducing inputs per layer to assess if its performance improves with more inducing inputs.} Table~\ref{tab: classifications} reports the mean test accuracy over $10$ runs, which shows that our IPVI framework for a $4$-layer DGP model performs the best in all three datasets. Additional results for   IPVI with and without parameter tying are found in Appendix~\ref{crappish}.
        \begin{table}
        \centering
        \caption{\textcolor{red}{Mean test accuracy (\%) achieved by IPVI, SGHMC, and DSVI for $3$ classification datasets.}}
        \begin{tabular}{l|cc|cc|cc|cc}
        \hline
        Dataset     & \multicolumn{2}{c|}{MNIST}
        & \multicolumn{2}{c|}{MNIST ($M=800$)}
        & \multicolumn{2}{c|}{Fashion-MNIST}                & \multicolumn{2}{c}{CIFAR-10}                      \\ \hline
                    & SGP            & DGP 4 
                    & SGP            & DGP 4 
                    & SGP            & DGP 4         & SGP            & DGP 4                     \\ \hline
        DSVI        
        & \textbf{97.32} & 97.41
        & \textbf{97.92} & 98.05
        & 86.98          & 87.99                    & 47.15          & 51.79                    \\
        SGHMC       
        & 96.41          & 97.55
        & 97.07          & 97.91
        & 85.84          & 87.08                    & 47.32          & 52.81                  \\
        \textbf{IPVI} 
        & 97.02          & \textbf{97.80}  
        & 97.85          & \textbf{98.23}  
        & \textbf{87.29} & \textbf{88.90}  & \textbf{48.07} & \textbf{53.27}  \\ \hline
        \end{tabular}      
        \label{tab: classifications}
        \end{table}
%

\subsection{\textcolor{red}{Parameter-Tying vs. No Parameter Tying}} \label{subsec: tying vs no tying}
\textcolor{red}{Table~\ref{tab: parameter tying} reports the train/test MLL achieved by IPVI with and without parameter tying for $2$ small datasets: Boston ($N=506$) and Energy ($N=768$). For Boston dataset, it can be observed that 
no tying consistently yields higher  train MLL and lower test MLL, hence indicating overfitting.
This is also observed for Energy dataset when the number of layers exceeds $2$.
For both datasets, as the number of layers (hence number of parameters) increases, overfitting 
becomes more severe for no tying. In contrast, parameter tying alleviates the overfitting 
considerably.
}

\begin{table}[ht]
\centering
\caption{\textcolor{red}{Train}/test MLL achieved by IPVI with and without parameter tying over $10$ runs.}
\label{tab: parameter tying}
\begin{tabular}{|l|lllll|}
\hline
Dataset    & \multicolumn{5}{c|}{Boston ($N=506$)}                                                                                    \\ \hline
DGP Layers & \multicolumn{1}{c}{1} & \multicolumn{1}{c}{2} & \multicolumn{1}{c}{3} & \multicolumn{1}{c}{4} & \multicolumn{1}{c|}{5} \\
No Tying   & \textcolor{red}{-1.86}/-2.21           & \textcolor{red}{-1.76}/-2.37           & \textcolor{red}{-1.64}/-2.48           & \textcolor{red}{-1.52}/-2.51           & \textcolor{red}{-1.51}/-2.57            \\
Tying      & \textcolor{red}{-1.91}/-2.09           & \textcolor{red}{-1.79}/-2.08           & \textcolor{red}{-1.77}/-2.13           & \textcolor{red}{-1.84}/-2.09           & \textcolor{red}{-1.83}/-2.10            \\ \hline
Dataset    & \multicolumn{5}{c|}{Energy ($N=768$)}                                                                                 \\ \hline
DGP Layers & \multicolumn{1}{c}{1} & \multicolumn{1}{c}{2} & \multicolumn{1}{c}{3} & \multicolumn{1}{c}{4} & \multicolumn{1}{c|}{5} \\
No Tying   & \textcolor{red}{-0.12}/-0.44
           & \textcolor{red}{\ 0.03}/-0.31
           & \textcolor{red}{\ 0.18}/-0.34
           & \textcolor{red}{0.20}/-0.47
           & \textcolor{red}{0.21}/-0.58                 \\
Tying      & \textcolor{red}{-0.16}/-0.32           
           & \textcolor{red}{-0.11}/-0.34           
           & \textcolor{red}{-0.02}/-0.23           
           & \textcolor{red}{0.10}/-0.01           
           & \textcolor{red}{0.17}/\ 0.13                 \\ \hline
\end{tabular}
\end{table}

\subsection{Unsupervised Learning: FreyFace Reconstruction} 
\label{subsec: freyface}
    \begin{figure}
        \includegraphics[scale=0.385]{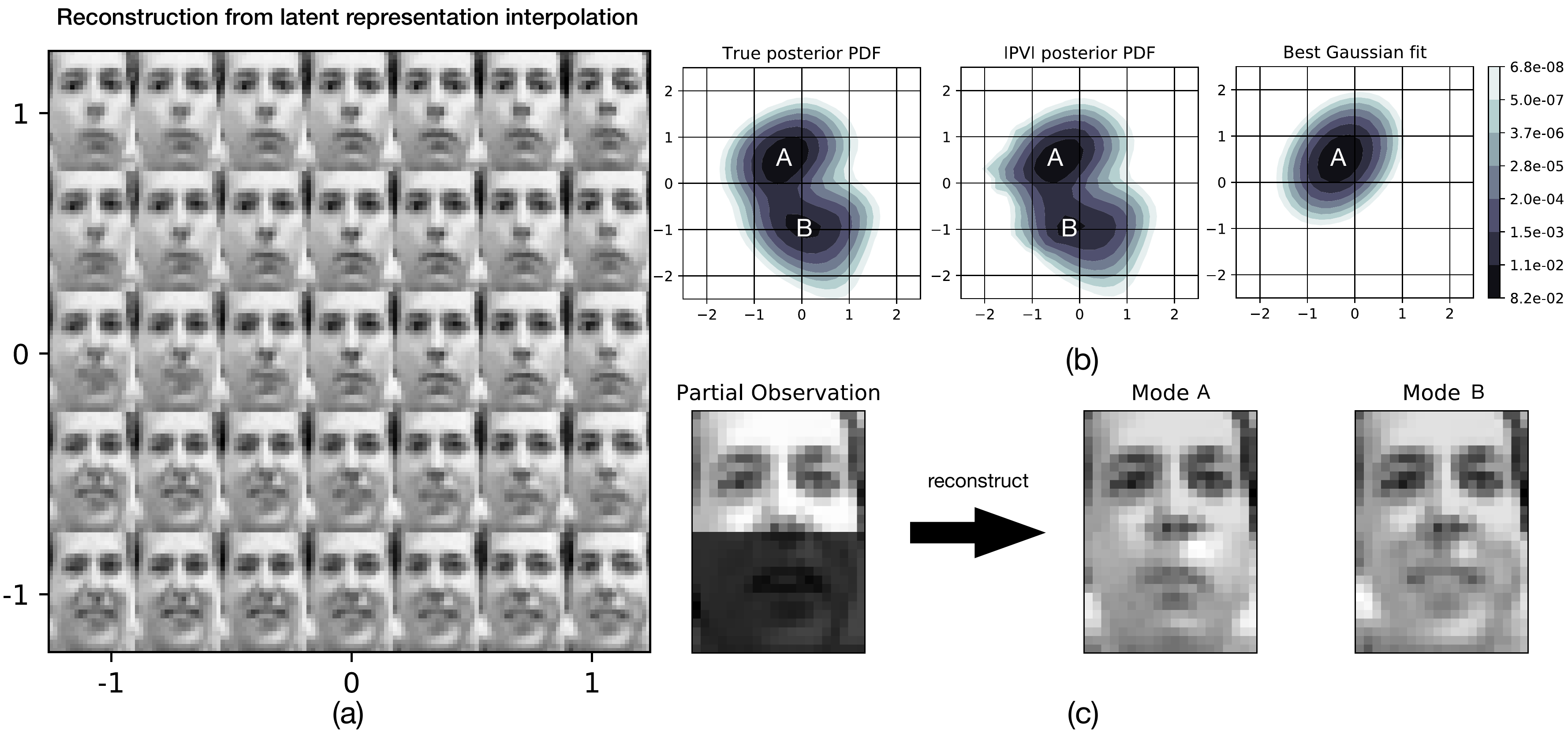}
        \caption{Unsupervised learning with FreyFace dataset. (a) Latent representation interpolation and the corresponding reconstruction. (b) True posterior $p(\mathbf{x}^\star|\mathbf{y}^\star_O)$ given the partial observation $\mathbf{y}^\star_O$ (left), variational posterior $q(\mathbf{x}^\star)$ learned by IPVI (middle), and Gaussian approximation (right). The PDF for $p(\mathbf{x}^\star|\mathbf{y}^\star_O)$ is calculated using Bayes rule where the marginal likelihood is computed using Monte Carlo integration.
(c) The partial observation (with the ground truth reflected in the dark region) and two reconstructed samples from $q(\mathbf{x}^\star)$.}
        \label{fig: gplvm}
    \end{figure}
%
%
%
%
%
A DGP can naturally be generalized to perform unsupervised learning. The representation of a dataset in a low-dimensional manifold can be learned in an unsupervised manner by the \emph{GP latent variable model} (GPLVM)~\cite{lawrence2004gaussian} where only the observations $\mathbf{Y}\triangleq \{\mathbf{y}_n\}^N_{n=1}$ are given and the hidden representation $\mathbf{X}$ is unobserved and treated as latent variables. The objective is to infer the posterior $p(\mathbf{X}|\mathbf{Y})$. The GPLVM is a single-layer GP that casts $\mathbf{X}$ as an unknown distribution and can naturally be extended to a DGP. So, we construct a $2$-layer DGP ($\mathbf{X}\to\mathbf{F}_1\to\mathbf{F}_2\to\mathbf{Y}$) and use the {generator} samples to represent $p(\mathbf{X}|\mathbf{Y})$. 

    We consider the FreyFace dataset~\cite{roweis2002global} taken from a video sequence that consists of $1965$ images with a size of $28 \times 20$. We select the first $1000$ images to train our DGP. To  ease visualization, the dimension of latent variables $\mathbf{X}$ is chosen to be $2$. 
Additional details for our experiments are found in Appendix~\ref{crappier}.    
    Fig.~\ref{fig: gplvm}a shows the reconstruction of faces across the latent space. Interestingly, the first dimension of the latent variables $\mathbf{X}$ determines the expression from happy to calm while the the second dimension controls the view angle of the face. 
    
    We then explore the capability of IPVI in reconstructing partially observed test data. Fig.~\ref{fig: gplvm}b illustrates that given only the upper half of the face, the real face may exhibit a multi-modal property, as reflected in the latent space; intuitively, one cannot always tell whether a person is happy or sad by looking at the upper half of the face. Our variational posterior accurately captures the multi-modal posterior belief whereas the Gaussian approximation can only recover one mode (mode A) under this test scenario. So, IPVI can correctly recover two types of expressions: calm (mode A) and happy (mode B). We did not empirically compare with SGHMC here because it is not obvious to us whether their sampler setting can be carried over to this unsupervised learning task.
\section{Conclusion} 
\label{sec: conclusion}
    This paper describes a novel IPVI framework for DGPs that can ideally recover an unbiased posterior belief of the inducing variables and still preserve the time efficiency of VI. To achieve this, we cast the DGP inference problem as a two-player game and search for a Nash equilibrium (i.e., an unbiased posterior belief) of this game using best-response dynamics.
We propose a novel parameter-tying architecture of the generator and discriminator in our IPVI framework for DGPs to alleviate overfitting 
and speed up training and prediction.  
Empirical evaluation shows that IPVI outperforms the state-of-the-art approximation methods for DGPs in regression and classification tasks and accurately learns complex multi-modal posterior beliefs in our synthetic experiment and an unsupervised learning task.
For future work, we plan to use our IPVI framework for DGPs to accurately represent the belief of the unknown target function in active learning~\cite{LowAAMAS13,NghiaICML14,LowAAMAS12,LowAAMAS08,LowICAPS09,LowAAMAS11,LowAAMAS14,YehongAAAI16} and Bayesian optimization~\cite{dai2019,Erik17,NghiaAAAI18,ling16,yehong2019,yehong17} when the available budget of function evaluations is moderately large.
We also plan to develop 
distributed/decentralized variants~\cite{LowUAI13,LowTASE15,LowRSS13,LowUAI12,hoang2019,HoangICML16,NghiaAAAI19,LowAAAI15,Ruofei18} of IPVI.

\textbf{Acknowledgements.} This research is supported by the National Research Foundation, Prime Minister's Office, Singapore under its Campus for Research Excellence and Technological Enterprise (CREATE) program, Singapore-MIT Alliance for Research and Technology (SMART) Future Urban Mobility (FM) IRG, National Research Foundation Singapore under its AI Singapore Programme Award No. AISG-GC-$2019$-$002$, and the Singapore Ministry of Education Academic Research Fund Tier $2$, MOE$2016$-T$2$-$2$-$156$.

\begin{small}
    \bibliographystyle{abbrv}
    \bibliography{Reference}
\end{small}


\clearpage
\appendix
            
\section{Proof of Proposition~\ref{prop: optimal T(u)}} 
\label{append: proof of Tpsiu}
    The objective function in~\eqref{eq: discriminator} can be re-written as
    \begin{equation*}
        \displaystyle \int p(\boldsymbol{\mathcal{U}}) \log (1 - \sigma(T(\boldsymbol{\mathcal{U}}))) \ \mathrm{d}\boldsymbol{\mathcal{U}} + \int q_{\Phi}(\boldsymbol{\mathcal{U}})\log \sigma(T(\boldsymbol{\mathcal{U}}))\ \mathrm{d}\boldsymbol{\mathcal{U}}\ .
    \end{equation*}
    The above integral is maximal in function $T$ if and only if the integrand is maximal in $T(\boldsymbol{\mathcal{U}})$ for every $\boldsymbol{\mathcal{U}}$. 
    Note that 
       the maximum of $a\log(t) + b\log(1-t)$ over $t\in [0, 1]$ is at $t={a}/{(a+b)}$
    for any $(a, b) \in \mathbb{R}^2 \backslash (0, 0)$.  Using this result, 
    \begin{equation*}
        \sigma(T^*(\boldsymbol{\mathcal{U}})) = \frac{q_{\Phi}(\boldsymbol{\mathcal{U}})} {q_{\Phi}(\boldsymbol{\mathcal{U}}) + p(\boldsymbol{\mathcal{U}})}
    \end{equation*}
    or, equivalently, 
    \begin{equation*}
        T^*(\boldsymbol{\mathcal{U}}) = \log q_{\Phi}(\boldsymbol{\mathcal{U}}) - \log p(\boldsymbol{\mathcal{U}})\ .
    \end{equation*}
\section{Proof of Proposition~\ref{prop: nash equilibrium}} 
\label{append: proof of nash equilibrium}
    If $( \{\Psi^*\}, \{\theta^*, \Phi^*\})$ is a Nash equilibrium, then according to Proposition~\ref{prop: optimal T(u)} and under the assumption that $T_{\Psi^*}$ is expressive enough, we know that \textbf{Player 1} is playing its optimal strategy $\Psi^*$ such that
    \begin{equation}
       T_{\Psi^*}(\boldsymbol{\mathcal{U}}) = \log q_{\Phi^*}(\boldsymbol{\mathcal{U}}) - \log p(\boldsymbol{\mathcal{U}})\ .
       \label{eq: best T}
     \end{equation} 
     Substituting~\eqref{eq: best T} into~\eqref{eq: new elbo} reveals that \textbf{Player 2}'s strategy $\{\theta^*, \Phi^*\}$ maximizes its payoff which is a function of $\{\theta, \Phi\}$:
     \begin{equation}
     \begin{array}{rl}
        \displaystyle \mathcal{F}(\theta, \Phi) \triangleq \hspace{-2.4mm}& \mathbb{E}_{q_{\Phi}(\boldsymbol{\mathcal{U}})}[\mathcal{L}(\theta, \mathbf{X}, \mathbf{y}, \boldsymbol{\mathcal{U}}) + \log p(\boldsymbol{\mathcal{U}}) - \log q_{\Phi^*}(\boldsymbol{\mathcal{U}})] \vspace{1mm}
        \\
        = \hspace{-2.4mm}&\mathbb{E}_{q_{\Phi}(\boldsymbol{\mathcal{U}})}[\mathcal{L}(\theta, \mathbf{X}, \mathbf{y}, \boldsymbol{\mathcal{U}}) + \log p(\boldsymbol{\mathcal{U}}) - \log q_{\Phi}(\boldsymbol{\mathcal{U}})+ \log q_{\Phi}(\boldsymbol{\mathcal{U}})- \log q_{\Phi^*}(\boldsymbol{\mathcal{U}})] \vspace{1mm}
        \\
        =\hspace{-2.4mm}& \mathcal{EL}(\theta, \Phi) + \mathrm{KL}[q_{\Phi}(\boldsymbol{\mathcal{U}})\Vert q_{\Phi^*}(\boldsymbol{\mathcal{U}})]
     \end{array}
     \label{eq: last equation}
     \end{equation}
     where $\mathcal{EL}(\theta, \Phi)$ is the ELBO in~\eqref{eq: phi elbo}. 
     
     Now, suppose that $\{\theta^*, \Phi^*\}$ does not maximize the ELBO. Then, there exists some $\{\theta^\prime, \Phi^\prime\}$ such that $\mathcal{EL}(\theta^\prime, \Phi^\prime) > \mathcal{EL}(\theta^*, \Phi^*)$. By substituting $\{\theta^\prime, \Phi^\prime\}$ into~\eqref{eq: last equation},
     \begin{equation*}
       \mathcal{F}(\theta^\prime, \Phi^\prime) = \mathcal{EL}(\theta^\prime, \Phi^\prime) + \mathrm{KL}[q_{\Phi^\prime}(\boldsymbol{\mathcal{U}})\Vert q_{\Phi^*}(\boldsymbol{\mathcal{U}})] > \mathcal{F}(\theta^*, \Phi^*)\ ,
     \end{equation*}
     which contradicts the fact that $\{\theta^*, \Phi^*\}$ maximizes~\eqref{eq: last equation}. Hence, $\{\theta^*, \Phi^*\}$ maximizes the ELBO, which is equal to the log-marginal likelihood $\log p_{\theta^*}(\mathbf{y})$ with $\theta^*$ being the maximum likelihood assignment and $q_{\Phi^*}(\boldsymbol{\mathcal{U}})$ being equal to the true posterior belief $p(\boldsymbol{\mathcal{U}}|\mathbf{y})$.
\subsection{Discussion on the Existence of Nash Equilibrium}
\label{append: exsist nash}
    \begin{prop}
        Suppose that the parametric representations of $T_\Psi$ and $g_\Phi$ are expressive enough to represent any function and the DGP model hyperparameters are fixed to be $\theta_\circ$.
        Then, the two-player pure-strategy game in~\eqref{eq: payoff} for the case of fixed $\theta_\circ$        
        has a Nash equilibrium. Furthermore, if $(\{\Psi^*\}, \{\theta_\circ,\Phi^*\})$ is a Nash equilibrium,  then $\{\Phi^*\}$ is a global maximizer of the ELBO for the case of fixed $\theta_\circ$ such that $q_{\Phi^*}(\boldsymbol{\mathcal{U}})$ is equal to the true posterior belief $p_{\theta_\circ}(\boldsymbol{\mathcal{U}}|\mathbf{y})$.
    \label{prop: fixed hyper}
    \end{prop}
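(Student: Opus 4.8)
The plan is to prove the two claims separately: the existence of a Nash equilibrium by an explicit construction, and the characterization of any equilibrium by reusing the argument already developed for Proposition~\ref{prop: nash equilibrium}.

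For existence, I would build a candidate equilibrium directly out of the true posterior. Since $p_{\theta_\circ}(\boldsymbol{\mathcal{U}}|\mathbf{y})$ is a well-defined distribution and $g_\Phi$ is assumed expressive enough to represent any distribution, I can choose $\Phi^*$ so that $q_{\Phi^*}(\boldsymbol{\mathcal{U}}) = p_{\theta_\circ}(\boldsymbol{\mathcal{U}}|\mathbf{y})$. I then set $T_{\Psi^*}(\boldsymbol{\mathcal{U}}) = \log q_{\Phi^*}(\boldsymbol{\mathcal{U}}) - \log p(\boldsymbol{\mathcal{U}})$, which is realizable because $T_\Psi$ is expressive enough. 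By Proposition~\ref{prop: optimal T(u)}, this $T_{\Psi^*}$ is exactly Player 1's optimal response to $q_{\Phi^*}$, so Player 1 has no profitable deviation.

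The crux is to verify that Player 2 also has no profitable deviation against this $T_{\Psi^*}$. Substituting $T_{\Psi^*} = \log q_{\Phi^*} - \log p$ into Player 2's payoff and following the same manipulation as in~\eqref{eq: last equation} gives $\mathcal{F}(\Phi) = \mathcal{EL}(\theta_\circ, \Phi) + \mathrm{KL}[q_\Phi \Vert q_{\Phi^*}]$. I would then rewrite the ELBO in its standard form $\mathcal{EL}(\theta_\circ, \Phi) = \log p_{\theta_\circ}(\mathbf{y}) - \mathrm{KL}[q_\Phi \Vert p_{\theta_\circ}(\boldsymbol{\mathcal{U}}|\mathbf{y})]$. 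Because $q_{\Phi^*} = p_{\theta_\circ}(\boldsymbol{\mathcal{U}}|\mathbf{y})$ by construction, the two KL terms cancel exactly, leaving $\mathcal{F}(\Phi) = \log p_{\theta_\circ}(\mathbf{y})$, which is independent of $\Phi$. Hence Player 2 is indifferent among all strategies and in particular cannot strictly improve upon $\Phi^*$, so $(\{\Psi^*\}, \{\theta_\circ, \Phi^*\})$ is a Nash equilibrium. This cancellation is the key step, and I expect it to be the main obstacle in the sense of being the nontrivial insight: it is also the reason the argument runs cleanly only once $\theta$ is fixed, since with $\theta$ free the constructed posterior would itself move with $\theta$ and the indifference argument would no longer close.

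For the characterization of an arbitrary Nash equilibrium $(\{\Psi^*\}, \{\theta_\circ, \Phi^*\})$, I would reuse the reasoning of Proposition~\ref{prop: nash equilibrium} with $\theta$ held at $\theta_\circ$. Optimality of Player 1 forces $T_{\Psi^*} = \log q_{\Phi^*} - \log p$ via Proposition~\ref{prop: optimal T(u)}, and substituting this into Player 2's payoff again yields $\mathcal{F}(\Phi) = \mathcal{EL}(\theta_\circ, \Phi) + \mathrm{KL}[q_\Phi \Vert q_{\Phi^*}]$. A contradiction argument — if some $\Phi'$ achieved a strictly larger ELBO, then $\mathcal{F}(\Phi') = \mathcal{EL}(\theta_\circ, \Phi') + \mathrm{KL}[q_{\Phi'} \Vert q_{\Phi^*}] > \mathcal{F}(\theta_\circ, \Phi^*)$, contradicting that $\Phi^*$ maximizes Player 2's payoff — shows $\Phi^*$ globally maximizes the ELBO for fixed $\theta_\circ$, whence $q_{\Phi^*}(\boldsymbol{\mathcal{U}}) = p_{\theta_\circ}(\boldsymbol{\mathcal{U}}|\mathbf{y})$. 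Throughout, the one place to be careful is the invocation of the expressiveness assumptions, which is what lets me realize both the target distribution $q_{\Phi^*}$ and the target function $T_{\Psi^*}$ exactly.
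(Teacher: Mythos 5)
Your proof is correct and follows essentially the same route as the paper's: constructing the candidate equilibrium from the true posterior $p_{\theta_\circ}(\boldsymbol{\mathcal{U}}|\mathbf{y})$ via the expressiveness of $g_\Phi$, setting the discriminator to the log-density ratio via Proposition~\ref{prop: optimal T(u)}, and showing Player 2's payoff collapses to the constant $\log p_{\theta_\circ}(\mathbf{y})$ through exactly the same KL cancellation, then deferring the characterization of arbitrary equilibria to the argument of Proposition~\ref{prop: nash equilibrium}. No gaps; this matches the paper's proof in Appendix~\ref{append: exsist nash}.
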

%
\begin{proof}
    Since we assume the parametric representation of $g_\Phi$ to be expressive enough to represent any function, we can find some $\{\Phi_\circ\}$ such that $q_{\Phi_\circ}(\boldsymbol{\mathcal{U}})$ is equal to the true posterior belief $p_{\theta_\circ}(\boldsymbol{\mathcal{U}}|\mathbf{y})$. We now know that $\{\Phi_\circ\}$ maximizes the ELBO in~\eqref{eq: phi elbo} for the case of fixed DGP model hyperparameters $\theta_\circ$, which we denote by $\mathcal{EL}(\theta_\circ, \Phi_\circ)$. 
    
    Since we assume the parametric representation of $T_\Psi$ to be expressive enough to represent any function, we can further obtain some $\{\Psi_\circ\}$ such that $T_{\Psi_\circ}(\boldsymbol{\mathcal{U}}) = \log q_{\Phi_\circ}(\boldsymbol{\mathcal{U}}) - \log p(\boldsymbol{\mathcal{U}})$. According to Proposition~\ref{prop: optimal T(u)}, $\{\Psi_\circ\}$ maximizes the payoff to \textbf{player 1}. Hence, \textbf{player 1} cannot improve its strategy to achieve a better payoff.
    
    Given that \textbf{player 1} plays strategy $\{\Psi_\circ\}$ for the case of fixed $\theta_\circ$, the payoff to \textbf{player 2} playing  strategy $\{\theta_\circ,\Phi\}$ is
    \begin{equation*}
     \begin{array}{rl}
        \displaystyle \mathcal{F}(\theta_\circ, \Phi) \triangleq \hspace{-2.4mm}&\displaystyle \mathbb{E}_{q_{\Phi}(\boldsymbol{\mathcal{U}})}[\mathcal{L}(\theta_\circ, \mathbf{X}, \mathbf{y}, \boldsymbol{\mathcal{U}}) + \log p(\boldsymbol{\mathcal{U}}) - \log q_{\Phi_\circ}(\boldsymbol{\mathcal{U}})] \vspace{1mm}
        \\
       =\hspace{-2.4mm}&\displaystyle\mathbb{E}_{q_{\Phi}(\boldsymbol{\mathcal{U}})}[\mathcal{L}(\theta_\circ, \mathbf{X}, \mathbf{y}, \boldsymbol{\mathcal{U}}) + \log p(\boldsymbol{\mathcal{U}}) - \log q_{\Phi}(\boldsymbol{\mathcal{U}})+ \log q_{\Phi}(\boldsymbol{\mathcal{U}})- \log q_{\Phi_\circ}(\boldsymbol{\mathcal{U}})] \vspace{1mm}
        \\
       =\hspace{-2.4mm}&\displaystyle \mathcal{EL}(\theta_\circ, \Phi) + \mathrm{KL}[q_{\Phi}(\boldsymbol{\mathcal{U}})\Vert q_{\Phi_\circ}(\boldsymbol{\mathcal{U}})]\vspace{1mm}
        \\
      = \hspace{-2.4mm}&\displaystyle 
        \log p_{\theta_\circ}(\mathbf{y}) -\mathrm{KL}[q_{\Phi}(\boldsymbol{\mathcal{U}})\Vert p_{\theta_\circ}(\boldsymbol{\mathcal{U}}|\mathbf{y})]+ \mathrm{KL}[q_{\Phi}(\boldsymbol{\mathcal{U}})\Vert q_{\Phi_\circ}(\boldsymbol{\mathcal{U}})]\vspace{1mm}
        \\
        =\hspace{-2.4mm}&\displaystyle \log p_{\theta_\circ}(\mathbf{y})\ .
     \end{array}
     \end{equation*}
     So, \textbf{player 2} receives a constant payoff (i.e., independent of $\{\Phi,\theta_\circ\}$) and cannot improve its strategy to achieve a better payoff. Since every player cannot improve strategy to achieve a better payoff, $(\{\Psi_\circ\}, \{ \theta_\circ, \Phi_\circ\})$ is a Nash Equilibrium.

The rest of the proof is similar to that
of Proposition~\ref{prop: nash equilibrium}.
\end{proof}   

\textcolor{red}{Given that the hyperparameters $\theta_\circ$ of a single-layer DGP (i.e., SGP) regression model are fixed, the true posterior belief $p_{\theta_\circ}(\boldsymbol{\mathcal{U}}|\mathbf{y})$ is guaranteed to be a Gaussian \cite{Titsias09a}. In this case, Proposition~\ref{prop: fixed hyper} indicates that $q_{\Phi^*}(\boldsymbol{\mathcal{U}})$ is equal to this Gaussian.
} 
%
%
%
\section{Additional Details for Experiments} 
    \subsection{Synthetic Experiment: Learning a Multi-Modal Posterior Belief} 
    \label{append: toy experiment}
        The prior belief is set as a mixture of $5$ Gaussians:
        \[
            p(\mathbf{f}) \triangleq p_i\sum_{i=1}^5 \mathcal{N}(\mu_i \exp({-8{x^2}}),\mathbf{K}_{\mathbf{XX}})
        \]
        where $p_i\triangleq 1/5$ for $i=1,\ldots,5$, $\mu_1\triangleq -8$, $\mu_2\triangleq -4$, $\mu_3\triangleq 0$, $\mu_4\triangleq 4$, $\mu_5\triangleq 8$, and $\mathbf{K}_{\mathbf{XX}}$ denotes a constant covariance matrix with a constant kernel $k(x,x')\triangleq\sigma_A^2$ and $\sigma_A^2\triangleq 1/(4-\exp(-{8}))$.
      
        Also, $p(\mathbf{y}|\mathbf{f})=\prod_n p(y_n|f_n)=\prod_n (1/(\sqrt{2\pi}\sigma_B))\exp(-(y_i-f_i)^2/(2\sigma_B^2))$ with a large noise variance $\sigma_B^2=7\exp(8)$. Then, the ground-truth posterior belief with $5$ modes can be recovered analytically using Bayes rule:
        \[
        p(\mathbf{f}|\mathbf{y}) = p'_i\sum_{i=1}^5 \mathcal{N}(\mu_i \exp(-8{x^2})+\delta_i,\mathbf{K}'_{\mathbf{XX}})
        \]
        where $p'_1= 0.1988$, $p'_2= 0.2004$, $p'_3= 0.2016$, $p'_4= 0.2004$, $p'_5= 0.1988$, $\delta_1= 0.000479$, $\delta_2= 0.00024$, $\delta_3= 0$, $\delta_4= -0.00024$, $\delta_5= -0.000479$,
        and $\mathbf{K}'_{\mathbf{XX}}$ denotes a constant covariance matrix with a constant kernel  $k'(x,x')\triangleq\sigma_C^2$ and
        $\sigma_C^2=1/4$. 
        
        In our implementation, the ground-truth GP kernel hyperparameter values are known to IPVI and SGHMC. We adopt a single inducing input fixed at $z=0$. The multi-modal posterior belief $p(f|\mathbf{y};x=0)$ is then approximated using the samples from $p(u|\mathbf{y};z=0)$. In 
Fig.~\ref{fig: SGHMC different setting}, we give additional results for different hyperparameter settings of   SGHMC to show that it is likely to obtain a biased posterior belief.
        
        We vary the number of hidden layers and number of neurons in each hidden layer to obtain generators with different number of parameters in Fig.~\ref{fig: synthetic data}c.
        \begin{figure}[ht]
        \begin{tabular}{lll}
          \includegraphics[height=33mm]{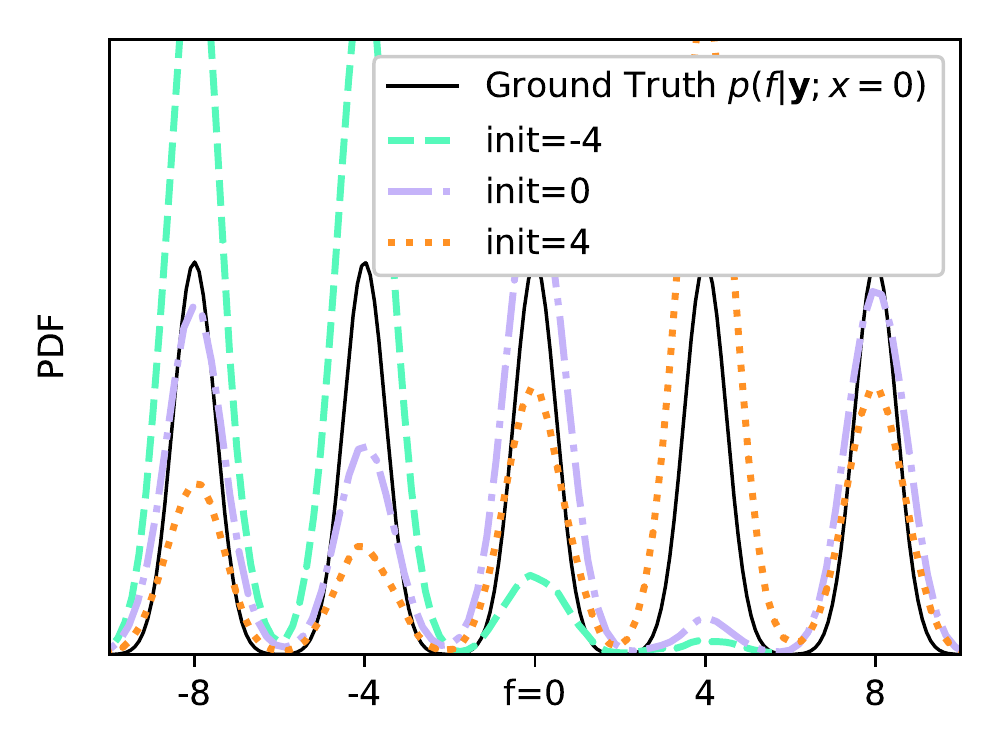} 
          &
          \hspace{-5mm}\includegraphics[height=33mm]{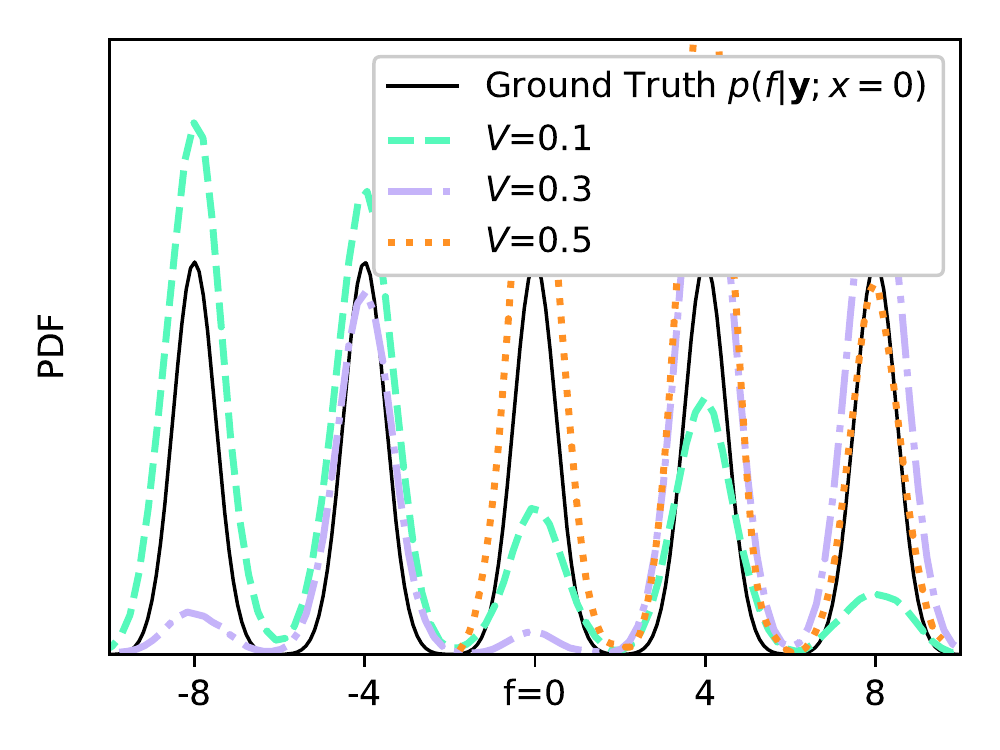} 
          &
          \hspace{-5mm}\includegraphics[height=33mm]{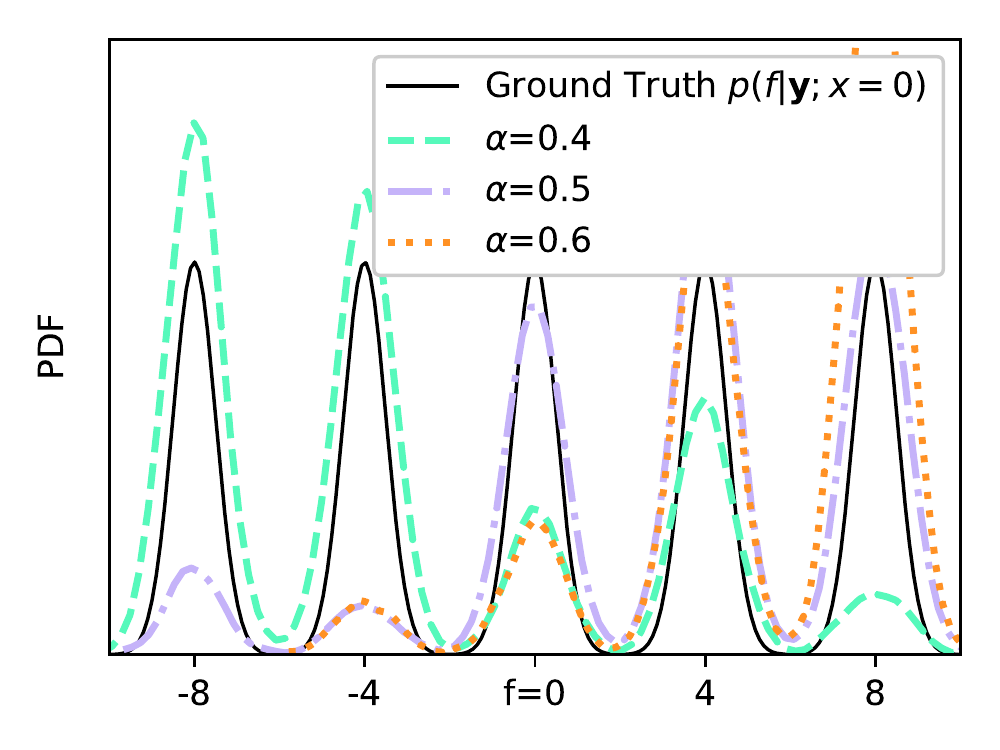}
          \\
          \hspace{22mm}(a) & \hspace{17mm}(b) & \hspace{17mm}(c) 
        \end{tabular}
  \caption{SGHMC with different hyperparameter settings of learning rate $\eta$, momentum $1-\alpha$, Fisher information $V$, and initialization $\mathrm{init}$ for starting the sampler: (a) $\eta=0.3, \alpha=0.4, V=0.1$; (b) $\eta=0.3, \mathrm{init}=4, \alpha=0.4$; and (c)  $\eta=0.3, \mathrm{init}=4, V=0.1$.} 
        \label{fig: SGHMC different setting}  
  \end{figure} 

\subsection{Unsupervised Learning: FreyFace Reconstruction}
\label{crappier}
    The dimensions of the hidden layers are $2$ for $\mathbf{X}$ and $100$ for $\mathbf{F}_1$ for FreyFace Reconstruction. We did not exploit inducing variables here. So, the training is a full DGP. We use PCA as the mean function for this unsupervised learning task.
    
    \textbf{Reconstruction.} Given a trained DGP model, the {reconstruction} task of a partially observed $\mathbf{y}_O^\star$ is to recover the missing part $\mathbf{y}_U^\star$ such that $\mathbf{y}^\star=[\mathbf{y}_O^\star,\mathbf{y}_U^\star]$. This reconstruction task involves two steps. The first step is to cast it as an DGP inference problem to get the posterior $p(\mathbf{x}^\star|\mathbf{y}_O^\star)$ with a Gaussian likelihood $p(\mathbf{y}_O^\star|\mathbf{y}^\star)$. The second step samples $\mathbf{y}^\star$ from $p(\mathbf{y}^\star|\mathbf{y}^\star_O)=\int p(\mathbf{y}^\star|\mathbf{x}^\star)\ p(\mathbf{x}^\star|\mathbf{y}_O^\star)\ \mathrm{d}\mathbf{x}^\star$. 

\subsection{Supervised Learning: Regression and Classification}
\label{crappish}
    In this subsection, we provide additional details for our experiments in the supervised learning tasks.
    
    \textbf{Learning Rates.} We adopt the default settings of the learning rates of the tested methods from   their publicly available implementations. The learning rates and maximum iteration for IPVI are tuned through grid search and cross validation with a default setting of $\alpha_\Psi=0.05$, $\alpha_\Phi=0.001$, $\alpha_\theta=0.025$ and cut-off at a maximum of $20000$ iterations. The learning rates for classification is simply set to be $0.02$ for all parameters.
    
    \textbf{Hidden Dimensions.} The dimension of inducing variables for all implementations are set to be  (i) the same as input dimension for the UCI benchmark regression and Airline datasets, (ii) $16$ for the YearMSD dataset, and (iii) $98$ for the classification tasks.
    
    \textbf{Mini-Batch Sizes.} The mini-batch sizes for all implementations are set to be (i) $10000$ for the UCI benchmark regression tasks, (ii) $20000$ for the large-scale regression tasks, and (iii) $256$ for the classification tasks.
    
    \textbf{Generator/Discriminator Details.} We have described the architecture design in Section~\ref{sec: arichitecture}. We will describe here the neural network represented by $g_{\phi_\ell}$. Firstly, the noise $\epsilon$ has the same dimension as the inputs $\mathbf{X}$ of the dataset. We implement $g_{\phi_\ell}$ using a two-layer neural network with hidden dimension being equal to the dimension of $\mathbf{Z}_{\ell}$ and leaky ReLU activation in the middle. Similarly, we implement $T_{\psi_\ell}$ using a two-layer neural network with hidden dimension being equal to the dimension of $\mathbf{Z}_{\ell}$ and leaky ReLU activation in the middle. The network initialization follows random normal distribution.
    
    \textbf{Mean Function of DGP.} The `skip-layer' connections are implemented in both SGHMC~\cite{havasi2018inference} and DSVI~\cite{salimbeni2017doubly} for DGPs and in our IPVI framework as well. The work of~\cite{duvenaud2014avoiding} has analyzed that using a zero mean function in the DGP prior causes some difficulty as each GP mapping is highly non-injective. To mitigate this issue, the work of~\cite{salimbeni2017doubly} has proposed to include a linear mean function $m(\mathbf{X}) = \mathbf{W}\mathbf{X}$ for all hidden layers. The 'skip-layer' connection $\mathbf{W}$ is set to be an identity matrix if the input dimension equals to the output dimension. Otherwise, $\mathbf{W}$ is computed from the top $H$ eigenvectors of the data under SVD. We follow the same setting as this 'skip-layer' mean function. Note that this 'skip-layer' mean function contains no trainable parameters.
    
    \textbf{Likelihood.} For the classification tasks, we use the {robust-max multiclass likelihood} \cite{hernandez2011robust}. Tricks like data augmentation are not applied, which means that the accuracy can still be improved further with those additional tricks.

\textbf{Parameter-Tying vs. No Parameter-Tying.} Tables~\ref{tab: tying regression} and~\ref{tab: tying classification} show, respectively, results of the test MLL for more UCI benchmark regression datasets and the mean test accuracy for the three classification tasks over $10$ runs that are achieved by IPVI with and without parameter tying. It can be observed that IPVI achieves a considerably better predictive performance with parameter tying. 
%

\begin{table}[ht]
    \caption{Test MLL achieved by our IPVI framework with and without parameter tying  for UCI benchmark regression datasets. Higher test MLL is better.}
    \begin{tabular}{|l|lllll|lllll|}
    \hline
    Dataset    & \multicolumn{5}{c|}{Boston}                                                                                            & \multicolumn{5}{c|}{Power}                                                                                             \\ \hline
    DGP Layers & \multicolumn{1}{c}{1} & \multicolumn{1}{c}{2} & \multicolumn{1}{c}{3} & \multicolumn{1}{c}{4} & \multicolumn{1}{c|}{5} & \multicolumn{1}{c}{1} & \multicolumn{1}{c}{2} & \multicolumn{1}{c}{3} & \multicolumn{1}{c}{4} & \multicolumn{1}{c|}{5} \\
    No Tying  & -2.21                 & -2.37                 & -2.48                 & -2.51                 & -2.57                  & -2.77                 & -2.79                 & -2.74                 & -2.73                 & -2.75                  \\
    Tying      & -2.09                 & -2.08                 & -2.13                 & -2.09                 & -2.10                  & -2.76                 & -2.69                 & -2.67                 & -2.70                 & -2.71                  \\ \hline
    Dataset    & \multicolumn{5}{c|}{Wine Red}                                                                                          & \multicolumn{5}{c|}{Protein}                                                                                           \\ \hline
    DGP Layers & \multicolumn{1}{c}{1} & \multicolumn{1}{c}{2} & \multicolumn{1}{c}{3} & \multicolumn{1}{c}{4} & \multicolumn{1}{c|}{5} & \multicolumn{1}{c}{1} & \multicolumn{1}{c}{2} & \multicolumn{1}{c}{3} & \multicolumn{1}{c}{4} & \multicolumn{1}{c|}{5} \\
    No Tying  & -0.97                 & -0.94                 & -0.96                 & -0.97                 & -0.97                  & -2.83                 & -2.72                 & -2.69                 & -2.70                 & -2.67                  \\
    Tying      & -0.84                 & -0.81                 & -0.86                 & -0.86                 & -0.85                  & -2.73                 & -2.57                 & -2.56                 & -2.59                 & -2.62                  \\ \hline
    \end{tabular}
    \label{tab: tying regression}
    \end{table}
\begin{table}[ht]
    \centering
    \caption{Mean test accuracy (\%) achieved by our IPVI framework with and without parameter tying   for three classification datasets.}
    \begin{tabular}{|l|ll|ll|ll|}
    \hline
    Dataset    & \multicolumn{2}{c|}{MNIST}                     & \multicolumn{2}{c|}{fashion-MNIST}             & \multicolumn{2}{c|}{CIFAR-$10$}                   \\ \hline
    DGP Layers & \multicolumn{1}{c}{1} & \multicolumn{1}{c|}{4} & \multicolumn{1}{c}{1} & \multicolumn{1}{c|}{4} & \multicolumn{1}{c}{1} & \multicolumn{1}{c|}{4} \\
    No Tying  & 96.77                 & 97.45                  & 86.69                 & 88.01                  & 47.13                 & 52.76                  \\
    Tying      & 97.02                 & 97.80                  & 87.29                 & 88.90                  & 48.07                 & 53.27                  \\ \hline
    \end{tabular}
    \label{tab: tying classification}
    \end{table}

\textbf{Performance Gap between SGPs.} Regarding the performance gap between SGPs, note that the optimal variational posterior is a Gaussian for a SGP regression model~\cite{Titsias09a}. 
However, since the SGP model hyperparameters are not known beforehand, DSVI SGP has to jointly optimize its hyperparameters and variational parameters. Such an optimization is not convex. Hence, there is no guarantee that it will reach the global optimum. Thus, the performance gap can be explained by IPVI’s ability to jointly find ``better'' values of hyperparameters and variational parameters.

\textbf{Evaluation of ELBO.} \textcolor{red}{We have also computed the estimate of ELBO by, after training our IPVI DGP models for the Boston dataset, continuing to train the discriminator using more calls of Algorithm~\ref{alg: D}. Table~\ref{tab: elbo} shows the mean ELBOs of DSVI and IPVI over $10$ runs for the Boston dataset. IPVI generally achieves higher ELBOs, which agrees with results of the test MLL in Fig.~\ref{fig:uci}. Since SGHMC DGP is not based on VI, no ELBO is computed for that method.}

\begin{table}[ht]
    \centering
    \caption{Mean ELBOs for Boston dataset.}
    \label{tab: elbo}
    \begin{tabular}{|c|cc|}
    \hline
    Model & \multicolumn{1}{c}{DSVI} & IPVI \\ \hline
    SGP            & {-956.57}                        & {-934.07}    \\
    DGP 2          & {-850.54}                        & {-846.65}    \\
    DGP 3          & {-836.13}                        & {-846.45}    \\
    DGP 4          & {-787.10}                        & {-776.93}    \\
    DGP 5          & {-770.67}                        & {-758.42}    \\ \hline
    \end{tabular}
\end{table}

\end{document}